\documentclass[10pt]{article} 
\usepackage[preprint]{tmlr}


\usepackage{amsmath,amsfonts,bm}









\def\eqref#1{equation~\ref{#1}}









\def\1{\bm{1}}










\DeclareMathAlphabet{\mathsfit}{\encodingdefault}{\sfdefault}{m}{sl}
\SetMathAlphabet{\mathsfit}{bold}{\encodingdefault}{\sfdefault}{bx}{n}













\usepackage{hyperref}
\usepackage{url}

\title{Transduction is All You Need for Structured Data Workflows}


\author{\name Alfio Massimiliano Gliozzo \email gliozzo@us.ibm.com \\
      \addr IBM
      \AND
      \name Naweed Khan \email naweed.khan@ibm.com \\
      \addr IBM
      \AND
      \name Christodoulos Constantinides \email christodoulos.constantinides@ibm.com \\
      \addr IBM
      \AND
      \name Nandana Mihindukulasooriya \email nandana@ibm.com \\
      \addr IBM
      \AND
      \name Nahuel Defosse \email nahuel.defosse@ibm.com \\
      \addr IBM
      \AND
      \name Gaetano Rossiello \email gaetano.rossiello@ibm.com \\
      \addr IBM
      \AND
      \name Junkyu Lee \email junkyu.lee@ibm.com \\
      \addr IBM
      }



\usepackage{amsmath}
\usepackage{amsthm}
\usepackage{amssymb}

\usepackage{thmtools, thm-restate}
\newtheorem{definition}{Definition}
\newtheorem{proposition}{Proposition}

\newtheorem{example}{Example}

\usepackage{comment}
\usepackage{makecell}
\usepackage{xcolor} 
\usepackage{nicefrac}       
\usepackage{microtype}      
\usepackage{newfloat}
\usepackage{longtable}
\usepackage{enumitem}
\usepackage{tikz}
\usetikzlibrary{arrows.meta, positioning, shapes.geometric, fit}
\usepackage[edges]{forest}

\usepackage{times}
\usepackage{soul}
\usepackage[utf8]{inputenc}
\usepackage[small]{caption}
\usepackage{subcaption} 
\usepackage{graphicx}

\usepackage{algorithm}
\usepackage[noend]{algpseudocode}
\algnewcommand{\LeftComment}[1]{\Statex \(\triangleright\) #1}

\usepackage{listings}
\DeclareCaptionStyle{ruled}{labelfont=normalfont,labelsep=colon,strut=off,justification=centering}
\lstset{%
    language=Python,
    basicstyle={\footnotesize\ttfamily},
    numbers=left,
    numberstyle=\tiny\color{gray},
    aboveskip=1em,
    showstringspaces=false,
    tabsize=2,
    breaklines=true,
    backgroundcolor=\color{gray!10},
    rulecolor=\color{black!40},
    keywordstyle=\color{blue},
    stringstyle=\color{orange},
    commentstyle=\color{gray}, 
    frame=single,
    captionpos=b
}
\floatstyle{plain}
\newfloat{listing}{tb}{lst}
\floatname{listing}{Listing}

\usepackage{booktabs}

\usepackage{ragged2e}
\usepackage{wrapfig}
\usepackage{multirow}

\begin{document}

\maketitle

\begin{abstract}
This paper introduces \texttt{Agentics}, a functional agentic AI framework for building LLM-based structured data workflow pipelines. 
Designed for both research and practical applications,  \texttt{Agentics} offers a new data-centric paradigm in which agents are embedded within data types, enabling logical transduction between structured states. 
This design shifts the focus toward principled data modeling, providing a declarative language where data types are directly exposed to large language models and the data values are composed through transductions between input and output types.
We present a range of structured data workflow tasks and empirical evidence demonstrating the effectiveness of this approach, including 
data wrangling, text-to-SQL semantic parsing, and domain-specific multiple-choice question answering, and data-driven scientific discovery tasks.
\end{abstract}

\section{Introduction}
Large language models (LLMs) have demonstrated remarkable capabilities in natural language understanding, reasoning, and tool use.
Recent advances in LLM-based agent systems equipped with human-level text generation and conversational abilities have opened promising directions in software engineering, scientific research, and a wide range of tasks that can be automated \citep{hosseini2025role,agashe2025agent}.
In the emerging paradigm of agentic AI, LLMs are integrated with external tools, structured knowledge sources, and memory modules to form specialized agents.
These agents, each designed with distinct functions and behaviors, collaborate to solve complex tasks \citep{acharya2025agentic,moshkovich2025taming,huang2025ai,han2024llm}.

Despite growing interest in agentic AI, current systems remain poorly suited for structured data workflows, where inputs and outputs are governed by explicit schemas and semantics \citep{hopkins2022structured}. 
Embedding structured data into natural language often fails in enterprise use cases such as analytics and data transformation \citep{sarirete2022artificial,heck2024data,putri2024artificial}, due to the lack of compositional guarantees, which leads to brittle workflows, cascading errors, and limited reproducibility.

While existing frameworks \citep{langgraph,crewai,pydantic-ai,dibia2024autogen,khattab2024dspy} offer modular agent composition and tool integration, they fall short of addressing the foundational challenge. Namely, how to endow agentic systems with algebraic structure that ensures robustness, modularity, and interpretability. As a result, these pipelines remain fragile, i.e., lacking formal semantics and struggling with structured data integration.

To overcome these limitations, we introduce \texttt{Agentics}, a framework for agentic AI grounded in \emph{logical transduction algebra}, a formalism for representing and composing transformations between structured inputs, intermediate states, and outputs.
\texttt{Agentics} provides a unifying programming model for generative structured data workflows, treating each step as an asynchronous transduction rather than a prompt-chained interaction. This abstraction enables modularity, parallelism, and schema-constrained transduction, addressing the fragility and lack of formal semantics in existing agentic pipelines.

At the core of \texttt{Agentics} lies the notion of logical transduction, namely, a typed transformation that maps an input object of one schema into an output object of another. 
The key distinction is that \texttt{Agentics} treats agents as stateless transducers operating over well-defined data types, hence shifting away from chat-based multi-agent architectures.
Unlike conversational agents that rely on serialized dependencies and multi-turn dialogue, 
\texttt{Agentics} agents support fully asynchronous execution.

\begin{figure}
\begin{minipage}{.48\textwidth}
\begin{lstlisting}[language=Python,breaklines=true,showstringspaces=false,basicstyle=\fontsize{7}{7}\ttfamily,numbers=none]
class ProductReview(BaseModel):
    reviewer: str
    text: str
    stars: int
\end{lstlisting}
\end{minipage}
\begin{minipage}{.48\textwidth}
\begin{lstlisting}[language=Python,breaklines=true,showstringspaces=false,basicstyle=\fontsize{7}{7}\ttfamily,numbers=none]
class SentimentSummary(BaseModel):
    sentiment: Literal["positive", "neutral", "negative"]
    reason: str
    
\end{lstlisting}
\end{minipage}
\centering
\begin{minipage}{.48\textwidth}
\centering
\begin{lstlisting}[language=Python,breaklines=true,showstringspaces=false,basicstyle=\fontsize{7}{7}\ttfamily,numbers=none]
{"reviewer": "Alice", "text": "Excellent product quality and fast delivery!", "stars": 5},
{"reviewer": "Bob", "text": "It's okay, but the package was damaged", "stars": 3},
{"reviewer": "Carol", "text": "Terrible experience, broken after one use!", "stars": 1}
\end{lstlisting}
\end{minipage}%
\begin{minipage}{.48\textwidth}
\begin{lstlisting}[language=Python,breaklines=true,showstringspaces=false,basicstyle=\fontsize{7}{7}\ttfamily,numbers=none,numbers=none]
{"sentiment": "positive", "reason": "Excellent quality and fast delivery"},
{"sentiment": "neutral", "reason": "Okay product, but package issues"},
{"sentiment": "negative", "reason": "Broke after one use"}
\end{lstlisting}
\end{minipage}
\caption{Logical Transduction Applied to Sentiment Summary}
\label{fig:logical_transduction_example_list}
\end{figure}

By design, \texttt{Agentics} exposes a programmatic interface to LLMs in which all input and output data are represented as typed objects, 
ensuring schema validation and constraint checking. 
Interestingly, such structured types are particularly well handled by LLMs,
and align with function-calling patterns, making them particularly well-suited for reliable and interpretable inference\citep{rossiello2021generative,rossiello2023knowgl}.
Figure~\ref{fig:logical_transduction_example_list} illustrates a simple example of
logical transduction. 
A \texttt{ProductReview} object containing a reviewer's name, review text, and rating is transduced into a new object that includes a sentiment label and a rationale. 
The LLM generates these new fields based on the structured input and the schema of the target type, no additional prompt engineering is required.
Beyond individual transductions, \texttt{Agentics} introduces 
an asynchronous \emph{map-reduce style} programming model for asynchronous workflow composition, enabling scalable, and controllable pipelines.  
In \texttt{Agentics} programming model, every step is a typed transformation, enabling reproducibility and adaptability in real-world  applications.

In this paper, we focus on proposing a new programming model for agentic AI, rather than introducing novel algorithms or engineered solutions for improving task-specific performance. Nevertheless, our evaluation shows that \texttt{Agentics} achieves competitive or improved results due to the structured prompting compared with baselines. 
Our emphasis is on the advantages of shifting toward a data-centric paradigm, demonstrating how this approach simplifies the development of LLM-based workflows while enhancing compositionality, scalability, and execution efficiency.

Section \ref{sec:related_work} summarizes related works and highlights the key difference between \texttt{Agentics} and existing frameworks. 
Section 3 develops logical transduction algebra and an asynchronous programming model, 
Section 4 develops the technical implementation of the proposed framework as a Python library, 
Section 5 shows experiment results on a wide variety of tasks that are closely related to generative structured data workflow tasks, such as data-driven knowledge discovery \citep{majumder2025discoverybench}, schema-matching \citep{parciak2150schema},
text-to-SQL semantic parsing \citep{hendrix1978developing,androutsopoulos1995natural,li2023can},
and  schema rich domain-sensitive multiple choice question answering tasks \citep{constantinides2025failuresensoriq}
\footnote{Due to the space limitation, we provide details of the implementation of each task and additional tasks in the Appendix.}
.
\section{Related Work}
\label{sec:related_work}
For decades, AI has been framed as an effort to emulate human intelligence. 
The Turing Test exemplifies this anthropomorphic ideal: a machine is deemed intelligent if its conversation is indistinguishable from a human’s.
The rise of LLMs has amplified this framing.
By enabling rapid prototyping of intelligent systems through natural language prompts, developing AI agents has become more intuitive and accessible.
Consequently, many agentic AI frameworks adopt agentic metaphors
such as memory, planning, and tool use, often realized via chat-based interfaces.

This approach has driven remarkable progress in consumer applications.
Yet, as tasks demand greater semantic precision, prompt-centric methods often prove brittle, opaque, and hard to scale, especially in structured data environments where reproducibility and accuracy are paramount.
Agentic AI frameworks typically position the agent as the locus of intelligence, with data as passive input. 
While effective for open-ended tasks, this model struggles in deterministic workflows. In enterprise settings where querying, transformation, and integration of structured data are common, conversational agents can introduce error propagation and unpredictable behavior.

To address the limitations of prompt-centric agentic systems, 
recent research has introduced techniques such as 
guardrails \citep{ouyang2022training,10.5555/3692070.3692521,zhang2024safetybench}, 
self-reflection \citep{shinn2023reflexion,asai2024selfrag}, and correction strategies \citep{madaan2023self,pan2024automatically}. 
While these methods enhance reliability, most frameworks still depend on free-form text or loosely structured prompts that remain fragile and difficult to verify, especially in tasks requiring high semantic precision.
In practice, techniques that offer more reliable interfaces such as structured decoding based on predefined schemas \citep{rossiello2023knowgl} or type-safe libraries \citep{pydantic-val} have gained traction. These approaches are increasingly supported by modern software stacks built around LLMs.

The shift toward structured data computation is reflected in several emerging frameworks.
\texttt{Pydantic-AI} \citep{pydantic-ai} emphasizes type-safe agentic programming and serves as agent framework for building structured AI applications using Pydantic types and multi-agent systems.
\texttt{LangGraph} \citep{langgraph} enables orchestration of stateful agents over finite state machines, supporting complex control flows.
\texttt{CrewAI} \citep{crewai} demonstrates strong performance in multi-agent coordination and message passing between agents and tools.
\texttt{DSPy} \citep{khattab2024dspy} pioneers declarative abstractions for prompt engineering and optimization, tightly coupled with structured templates.
While these frameworks have evolved toward conversational multi-agent coordination in networked environments, \texttt{Agentics} proposes a shift toward computation centered on data semantics and type-driven transformations—enabling more robust, scalable, and interpretable workflows for generative structured data tasks.

\section{Logical Transduction Algebra}
\label{sec:logical-transduction-algebra}
\texttt{Agentics} leverages asynchronous, parallel LLM inference to support enterprise-scale workflows over structured data. 
To ground this capability, we introduce \emph{Logical Transduction Algebra} (LTA): 
a typed, compositional calculus for building, analyzing, and optimizing LLM-powered pipelines.
Our work is closely related to relational algebra \citep{codd1970relational} and the MapReduce programming model \citep{dean2008mapreduce}. 
Here, we present an abridged version of the formal Logical Transduction Algebra in the main paper. 
The full details are provided in the Appendix.

A \emph{logical transduction} is a semantically grounded transformation from an object $x$ of type $X$ to an object $y$ of type $Y$ such that each field of $y$ is logically justified by information in $x$ under the constraints of the source/target schemas. 
Concretely, schemas are realized as \texttt{Pydantic} types in Python library implementation,
which allows type-checking and constraints make intermediate states explicit and auditable, while aligning naturally with LLM function-calling behaviors. Logical transduction is executed between any two Agentics (AGs).

\paragraph{Definition: Agentics (AG)}
Let $\Theta$ be the universe of types. A type $T\!\in\!\Theta$ is a finite set of named slots $T=\{(s_i,T_{s_i})\}$ with $T_{s_i}\!\in\!\Theta$.
An \emph{Agentic structure} $AG$ bundles a schema and a list of instances:
\[
AG := \{\, s_{\text{atype}}\!:\!\Theta,\;\; s_{\text{states}}\!:\!\text{List}[\,s_{\text{atype}}\,] \,\}.
\]
We write $AG[X]$ for an agentic structure with schema $X$ and $\mathbf{x}\!=\!AG[X]$ for a particular instance list. Concatenation of state lists endows instances of a fixed $AG[X]$ with a monoid structure, giving a simple but useful algebra over batches.

\paragraph{Definition: Transduction operator ($\ll$)}
The basic operator of LTA is the left-shift $\ll$, which maps a source object into the target schema:
\[
\mathbf{y} \;:=\; AG[Y] \ll x
\quad\text{where}\quad
\mathbf{y}.s_{\text{states}}=\{\,y\,:\, y \text{ satisfies } Y \text{ and is logically inferred from } x\,\}.
\]
Operationally, $\ll$ renders typed inputs into prompts, invokes an LLM (optionally with tools/RAG/few-shot), and parses/validates the result into the target type $Y$.

\paragraph{Definition: Prompt function (P)}
A prompt function $P:\text{List}[T]\!\to\!\texttt{str}$ serializes typed states to text, bridging structured data and LLM inputs. Zero-shot transduction applies $P$ per state:
\[
\mathbf{y}[i] \;=\; AG[Y] \ll P(\mathbf{x}[i]),
\]
The default prompt function of AG is the pydantic model\_dump() method with returns a json dictionary representing the state.

\paragraph{Lemma: Properties of LTA}
Let the \emph{transduction context} , i.e. the LLM, decoding settings, tools, and few-shot used by the AG, fixed. Then the following conditions applies: 
\begin{itemize}
\item \emph{Conditional determinism}: Re-invoking $\ll$ on the same $x$ under the same context yields the same $y$, enabling reproducibility.
\item \emph{Statelessness}: $y$ depends only on $x$ and the context, not on other inputs, enabling asynchronous parallel execution.
\item \emph{Compositionality}: If $\mathbf{y}=AG[Y]\ll \mathbf{x}$ and $\mathbf{z}=AG[Z]\ll \mathbf{y}$, then $\mathbf{z}=AG[Z]\ll AG[Y]\ll \mathbf{x}$, giving functional-style pipeline composition.
\end{itemize}

\paragraph{AType Operators}
In LTA, operations among agentic types (ATypes) provide the foundation for composing and reasoning about structured workflows. Each type X is defined as a set of named slots ($s_i$, $T_{s_i}$), and standard set operations are applied component-wise: the union $X \cup Y$ collects all slots present in either schema, the intersection $X \cap Y$ keeps only shared slots, the difference $X \setminus Y$ removes slots from X that also appear in Y, and the Cartesian product $X \times Y$ builds composite types pairing slots from both. At the level of instances, Agentic structures AG[X] (a schema plus a list of states) form a monoid under concatenation of state lists, with the empty instance as identity. 

\paragraph{Asynchronous MapReduce.}
To scale beyond single-step transduction, LTA provides two higher-order operators:
\[
\texttt{amap}:(AG[X], f)\to AG[Y], \qquad
\texttt{areduce}:(AG[X], g)\to AG[Y],
\]
where $f:X\!\to\!\text{List}[Y]$ is applied independently to each $x_i$ (filter/transform/fan-out), and $g:\text{List}[X]\!\to\!Y$ aggregates many states (summaries, rankings, joins). Because $\ll$ is stateless, \texttt{aMap} execute the function f, which might embed transductions, in asynchronous fashion. In contrast, \texttt{aReduce} functions takes all the states of the input agentic at once, and therefore cannot be asyncronously executed.

In short, LTA provides a formalism to make assumptions explicit; $\ll$ provides a uniform \emph{contract} between stages; \texttt{amap}/\texttt{areduce} expose parallel structure and enable hierarchical summaries; and conditional determinism/statelessness support reproducible, high-throughput execution. Together, LTA turns LLM pipelines from brittle prompt chains into modular, optimized, and auditable programs.
\section{Technical Implementation}
\label{sec:techical_implementation}
\texttt{Agentics} framework is distributed as a Python library.
As briefly introduced in the logical transduction algebra,
we define the Agentic structure AG as a container for a list of typed objects. Within this framework, LLM inference is conceptualized as logical transduction, i.e., the process of inducing one object from anothers based on a predefined type schema.
The framework is built upon two essential components: the Agentic structure, implemented as the meta-class \texttt{AG}, and the \texttt{PydanticTransducer}, built on top of \texttt{Pydantic} to enable type-safe object transformation.
In this section, we describe these core elements and present examples that illustrate the \textit{Agentic programming model} in practice.

\subsection{Example Usage of Metaclass AG}
The metaclass \texttt{AG} is a lightweight yet expressive container for typed data and its execution context. It provides the following three essential capabilities.
First, it binds a \emph{typed schema}, which we call \texttt{AType}, a \texttt{pydantic.BaseModel} subclass.
Second, it holds a \emph{list of states}, where each state is a validated instance of \texttt{AType}.
Third, it carries an \emph{execution context of transducer}, such as \texttt{llm}, tools, prompt template, batch size, and decoding parameters.

\paragraph{Metaclass AG}
The metaclass \texttt{AG} allows structured states to be natively represented in Python, while remaining agnostic to the specific LLM providers (e.g., OpenAI, Google DeepMind, Anthropic, Meta AI, WatsonX, etc).
The following pseudo code shows an example of \texttt{AG}.
\begin{lstlisting}[language=Python,basicstyle=\footnotesize\ttfamily]
class AG(BaseModel):
    atype: Type[BaseModel]               # target schema (Pydantic model)
    states: List[BaseModel] = []         # instances of 'atype'
    # execution context
    llm: Any = None                      # LLM client/handle
    tools: Optional[List[Tool]] = []     # optional tool registry
    prompt_fn: Optional[Callable] = lambda x: x.model_dump()
    batch_size: int = 20                 # async batch size
\end{lstlisting}

Given an \texttt{AType} and a list of typed instances,
an instance of \texttt{AG} carries the states and
manages their transformation either through logical transduction or via asynchronous functions in a map/reduce-style program.

\paragraph{Pydantic Models as Schemas}
An \texttt{AType} can be any subclass of \texttt{Pydantic} \texttt{BaseModel}.
For instance, a news schema \texttt{StockNews} may be defined as shown below, 
and
an \texttt{AG} instance \texttt{news} can be instantiated with the states of type 
\texttt{StockNews}.
\begin{lstlisting}[language=Python,basicstyle=\footnotesize\ttfamily]
class StockNews(BaseModel):
    ticker: str    # e.g., "AAPL"
    content: str   # raw news article text
    date: datetime # publication time

news = AG(atype=StockNews, states=[StockNews(ticker="AAPL", content="Apple shares surged after record iPhone sales were announced.",date="2025-01-15")])
\end{lstlisting}

Note that an \texttt{AG} instance behaves like a list enriched with a type system.
\begin{lstlisting}[language=Python, basicstyle=\footnotesize\ttfamily]
news = AG.from_csv("apple_news.csv")
for article in news:    # iterate states like a list
    print(article.ticker)
news.add_attribute("sentiment", str)
news.rebind_atype(StockNews) # rebind to new schema with added attribute.
\end{lstlisting}

\paragraph{Logical Transduction ($\ll$)}  
\texttt{Agentics} framework overloads the $\ll$ operator to express the
\emph{logical transduction} between Agentic structures. 
The left operand is the \emph{target} \texttt{AG}, 
which becomes populated by transducing the states of the right operand.
The following example shows an application of logical transduction to sentiment summary by 
populating the \texttt{SentimentScore} instances from \texttt{news}.
\begin{lstlisting}[language=Python, basicstyle=\footnotesize\ttfamily]
class SentimentScore(BaseModel):
    ticker: str
    sentiment: float   # normalized score in [-1, 1]
    label: str         # "pos", "neutral", "neg"
output = await (AG(atype=SentimentScore) << news)
#output[0]: SentimentScore(ticker="AAPL", sentiment=0.8, label="pos")
\end{lstlisting}
\paragraph{Map/Reduce Paradigm} 
The metaclass \texttt{AG} applies asynchronous mapping to its list of states, executing either the logical transduction or a (possibly asynchronous) function $f$ via \texttt{AG.amap(f)} on each \texttt{state} independently.
Since transductions are stateless, \texttt{amap} evaluations over multiple states can run asynchronously in parallel. In other words, the mapping can be batched for efficiency.
In contrast, \texttt{AG.areduce(f)} aggregates a collection of states into a single (or small set of) output(s). Because it consumes the entire list of states, it operates synchronously and is not parallelizable.
\paragraph{Example Workflow}  
By chaining logical transduction ($\ll$), \texttt{amap}, and \texttt{areduce}, 
a complex workflow can be expressed declaratively. 
For example, a sentiment-driven stock ranking can be built as follows.
This combination of typed schemas, logical transduction, and asynchronous map–reduce execution yields workflows that are composable, interpretable, and highly scalable.
\begin{lstlisting}[language=Python, basicstyle=\footnotesize\ttfamily]
# (1) Gather news for each ticker
news_ag = await AG(atype=StockNews).amap(fetch_news_for_ticker)
# (2) Extract sentiment per article
scores_ag = await AG(atype=SentimentScore) << news_ag
# (3) Aggregate to stock-level sentiment
stock_sentiment = scores_ag.areduce(group_by_ticker_mean)
# (4) Rank portfolio by sentiment
portfolio_ranking = stock_sentiment.areduce(rank_portfolio)
\end{lstlisting}

\subsection{PydanticTransducer}
Logical transduction triggers creation and execution of a 
\texttt{PydanticTransducer}, which is a stateless agent whose role is to generate 
a valid instance of the target \texttt{AType} given text input.
The textual input can represent virtually any concept accessible to LLMs, 
while the structured output ensures reliability in downstream tasks. 
Because agents in \texttt{Agentics} avoids shared conversational memory, it naturally supports asynchronous execution and efficient scale-out.

\begin{wrapfigure}{r}{0.5\linewidth}
    \centering
    \includegraphics[width=\linewidth]{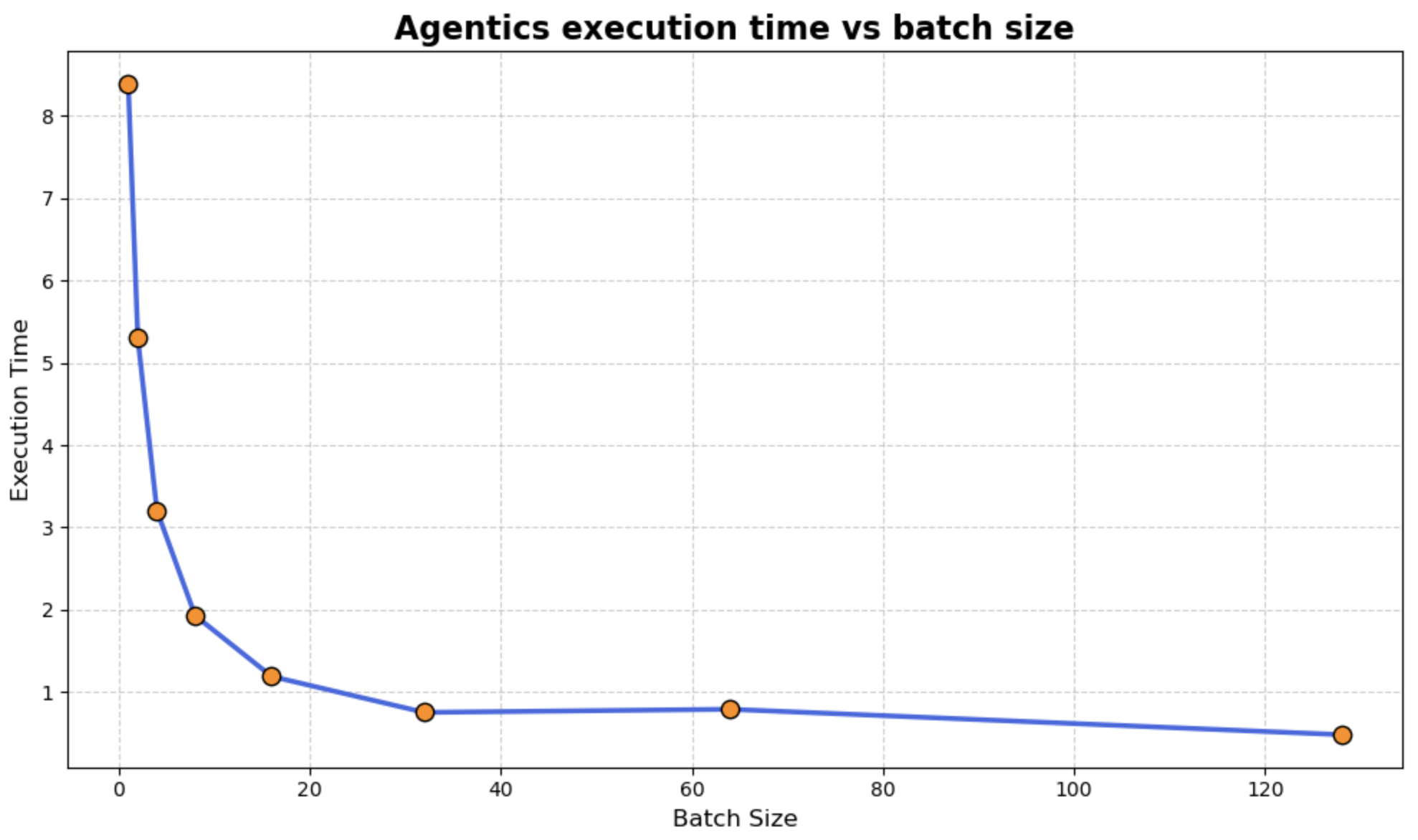} 
    \caption{\scriptsize Average time (sec) per question}
\end{wrapfigure}
To ensure high throughput and responsiveness across datasets of varying sizes, the transduction operator processes data in configurable batches, typically ranging from 8 to 32 items. As shown in the figure on the right, the execution time decreases as the batch size increases from 1, with performance gains saturating around a batch size of 32. 
This trend reflects near-linear improvements in processing speed across all tasks, highlighting the efficiency of batched execution. 
The underlying asynchronous map/reduce programming model —grounded in a logical transduction algebra,
which we call \texttt{Agentic} programming model enables scalable and robust computation, adapting seamlessly to diverse data workloads.

\section{Experiments}
In this paper, we evaluate \texttt{Agentics}, a programming framework for building generative data workflows with a focus on computational efficiency, scalability, ease of design, and accuracy. 
While prior work on agentic AI frameworks has emphasized capabilities such as planning and tool use, 
we focus on the quality and structure of agentic workflow pipelines, particularly in data-centric tasks.
We test the following hypotheses:
(1) \texttt{Agentics} supports a data-centric paradigm through declarative data modeling via type schemas, which decouples logical agent workflows from chat-centric paradigms. This enables intuitive and functional composition of pipelines,
(2) the structured prompts induced from declarative data models reduce the burden of manual prompt engineering while maintaining or improving task performance. They are effective in tasks that benefit from clear contexts.

\subsection{Data Workflow Tasks}
We evaluate canonical structured data workflow tasks such as schema matching with healthcare data \citep{parciak2150schema}, 
data imputation with the Buy and Restaurant datasets \citep{10.14778/3574245.3574258}, 
and text-to-SQL benchmarks \citep{li2023can,zheng2024archer}.

\subsubsection{Schema Matching}
Schema matching is a canonical data workflow task that identifies mappings between semantically identical elements in two relational schemas. Recent works such as~\citep{parciak2150schema} have explored the use of LLMs for the schema matching task.
The schema matching benchmark by~\citet{parciak2150schema} contains 9 datasets, each consisting of a source table from the MIMIC-IV dataset~\cite{johnson2023mimic} and a target table from the OHDSI OMOP Common Data Model~\cite{ohdsi2019book}. Specifically, the mapping problem involves identifying which source column can be matched to a column in the target table. 
Table~\ref{tab:schm-match-dataset} shows the list of datasets with their name of the table from MIMIC-IV and OMOP schemas, and also shows the number of columns in each table.
The number of matching tables (GT Matches) are shown in the right most column.

\begin{table}[ht]
\centering
\small
\setlength{\tabcolsep}{6pt}
\renewcommand{\arraystretch}{1.1}
\begin{tabular}{l ll ll r r}
\toprule
\multirow{2}{*}{\textbf{Dataset}} & \multicolumn{2}{c}{\textbf{MIMIC}} & \multicolumn{2}{c}{\textbf{OMOP}} & \multirow{2}{*}{\textbf{Candidates}} & \multirow{2}{*}{\textbf{GT Matches}} \\
\cmidrule(lr){2-3} \cmidrule(lr){4-5}
 & \textbf{table name} & \textbf{columns} & \textbf{table name} & \textbf{columns} & & \\
\midrule
AdCO  & admissions     & 16 & condition\_occurrence & 16 & 256 & 2  \\
AdVD  & admissions     & 16 & visit\_detail          & 19 & 304 & 5  \\
AdVO  & admissions     & 16 & visit\_occurrence      & 17 & 272 & 8  \\
DiCO  & diagnoses\_icd &  5 & condition\_occurrence & 16 &  80 & 2  \\
LaMe  & labevents      & 10 & measurement           & 20 & 200 & 10 \\
PaPe  & patients       &  6 & person                & 18 & 108 & 5  \\
PrDE  & prescriptions  & 17 & drug\_exposure        & 23 & 391 & 6  \\
SeVD  & services       &  5 & visit\_detail         & 19 &  95 & 5  \\
TrVD  & transfers      &  7 & visit\_detail         & 19 & 133 & 6  \\
\midrule
\textbf{Total} & & & & & \textbf{1839} & \textbf{49} \\
\bottomrule
\end{tabular}
\caption{Schema Matching Dataset. The table shows the names of tables from MIMIC-IV and OMOP schemas, and the number of matching tables.
}
\label{tab:schm-match-dataset}
\end{table}

\paragraph{Data Model}
The schema of the source and target table follows the following data model with table and column names and descriptions. \texttt{Attributes} defines a list of \texttt{Attribute} states.
\begin{samepage}
\begin{lstlisting}[language=Python,breaklines=true,showstringspaces=false,basicstyle=\fontsize{7}{7}\ttfamily]
class Attribute(BaseModel):
    relation_name: Optional[str] = Field(None, description="table name")
    relation_description: Optional[str] = Field(None, description="table description")
    attribute_name: Optional[str] = Field(None, description="column name")
    attribute_description: Optional[str] = Field(None, description="column description")
class Attributes(BaseModel):
    attributes: Optional[list[Attribute]] = Field(None, description="list of Attributes")
\end{lstlisting}
\end{samepage}

\paragraph{Logical Transduction}
Given a source table with $M$ columns and a target table with $N$ columns, 
we implemented the mapping task as described in~\citep{parciak2150schema} in two variations: \textit{1-to-1} and \textit{1-to-N}.
In the \textit{1-to-1} setting, each prompt contains a pair of columns from the source and target tables, and the LLM is tasked with determining whether they are semantically equivalent. This setting requires $MN$ LLM calls.
In the \textit{1-to-N} setting, each prompt contains a column from the source table and $N$ columns from the target table, requiring only $M$ LLM calls. The LLM evaluates all possible pairs and identifies all semantically equivalent matches in a single inference.

The following pseudocode shows the \texttt{Agentics} program that creates \texttt{AG} instances for the source and target tables. 
To parallelize the mapping, \texttt{mimic\_omop} creates a product of the two \texttt{AG} objects and 
adds an \texttt{invertible} attribute to transduce the truth assessment of the mappings.
\begin{samepage}
\begin{lstlisting}[language=Python,breaklines=true,showstringspaces=false,basicstyle=\fontsize{7}{7}\ttfamily]
mimic_data = AG.from_states(mimic_states, Attribute)
omop_data = AG.from_states(omop_states, Attributes)
mimic_omop = mimic_data.product(omop_data)
mimic_omop = mimic_omop.add_attribute(slot_name = "invertible", slot_type = list[bool])
mimic_omop = await mimic_omop.self_transduction(mimic_omop.fields, ["invertible"])
\end{lstlisting}
\end{samepage}

\paragraph{F1-Scores}
We have implemented the schema matching task in \texttt{Agentics} and ensembled results from two open-source models, GPT-OSS and LLaMA-4.
Ensembling the two pipelines from two models takes the disjunction of their assessments, which may decrease accuracy but increase recall.

Table \ref{tab:schm-match-results} shows detailed results from the experiment.
The mean F1-score over 9 datasets is summarized as follows.
For the \textit{1-to-1} setting, the GPT-3.5 baseline achieves 0.241, while Agentics achieves \textbf{0.325}.
For the \textit{1-to-N} setting, the GPT-3.5 baseline achieves \textbf{0.398}, and Agentics achieves 0.382, slightly behind.
Note that we achieved significantly better performance with smaller-parameter models in the \textit{1-to-1} setting, and slightly worse results in the \textit{1-to-N} setting.

\begin{table}[ht]
\centering
\small
\setlength{\tabcolsep}{6pt}
\renewcommand{\arraystretch}{1.2}
\begin{tabular}{l cc cc}
\toprule
\multirow{2}{*}{\textbf{Datasets}} & \multicolumn{2}{c}{\textbf{1-to-1}} & \multicolumn{2}{c}{\textbf{1-to-N}} \\
\cmidrule(lr){2-3} \cmidrule(lr){4-5}
 & \makecell{\textbf{GPT 3.5} \\ \textbf{Baseline}} & \makecell{\textbf{Agentics} \\ \textbf{gpt-oss \& llama4}} &  \makecell{\textbf{GPT 3.5} \\ \textbf{Baseline}} & \makecell{\textbf{Agentics} \\ \textbf{gpt-oss \& llama4}} \\
\midrule
AdCO  & 0.000 & \underline{0.500} & 0.133 & 0.250 \\
AdVD  & 0.000 & \underline{0.330} & 0.083 & 0.250 \\
AdVO  & 0.235 & \underline{0.400} & \underline{0.320} & 0.143 \\
DiCO  & \underline{0.667} & 0.500 & \underline{0.800} & 0.667 \\
LaMe  & \underline{0.471} & 0.364 & 0.500 & 0.500 \\
PaPe  & \underline{0.571} & 0.000 & 0.500 & \underline{0.667} \\
PrDE  & 0.222 & 0.000 & \underline{0.417} & 0.211 \\
SeVD  & 0.000 & \underline{0.500} & 0.400 & 0.400 \\
TrVD  & 0.000 & \underline{0.333} & 0.429 & \underline{0.600} \\
\midrule
\textbf{Mean} & \textbf{0.241} & \textbf{\underline{0.325}} & \textbf{\underline{0.398}} & \textbf{0.382} \\
\bottomrule
\end{tabular}
\caption{Schema matching F1 scores. GPT 3.5 baseline results are from ~\cite{parciak2150schema}.}
\label{tab:schm-match-results}
\end{table}

\subsubsection{Text-to-SQL}
We evaluated text-to-SQL pipelines composed of multiple components on challenging text-to-SQL benchmarks. The dev set of BIRD-bench \citep{li2023can} contains 1,534 questions across 11 databases, and the dev set of the Archer dataset \citep{zheng2024archer} contains 104 questions across 10 databases.\footnote{State-of-the-art performance on both benchmarks is available on the official leaderboards. The best dev performance is 76.14\% on BIRD-bench (\url{https://bird-bench.github.io/}) and 38.46\% on Archer (\url{https://sig4kg.github.io/archer-bench/}).} 
In our evaluation, we focus on the data-centric design aspects of \texttt{Agentic} text-to-SQL pipelines. Composing various components
such as randomly generated few-shot examples, schema linker outputs, keywords from topic models, and sub-questions with optimized prompts collectively improves the execution match results by \textbf{10.33\%} over the baseline performance of Llama-3.3-70B.

\paragraph{Asynchronous Pipeline}
The text-to-SQL task can introduce the following data model for each problem in the dataset, where a natural language \texttt{question} and the SQL \texttt{ddl} scripts for creating the database tables are the input fields provided, and the remaining fields are generated by passing through asynchronous pipelines. The \texttt{enrichment} field annotates semantic meanings for the tables and columns, \texttt{sql\_query} is the generated SQL from LLMs, and \texttt{execution\_result} is the resulting data frame obtained by executing the \texttt{sql\_query}.
\begin{samepage}
\begin{lstlisting}[language=Python,breaklines=true,showstringspaces=false,basicstyle=\fontsize{7}{7}\ttfamily]
class Text2SQLTask(BaseModel):
    question: str = Field(description="The input natural language question")
    ddl: str = Field(description="The database schema in DDL")
    enrichment: Optional[DB] = Field(description="Additional database enrichments")
    few_shots: Optional[Problem] = Field(description="question, sql pair")
    sql_query: Optional[str] = Field(description="SQL query generated")
    execution_result: Optional[List[Dict[str, str]]] = Field(description="resulting table")
\end{lstlisting}
\end{samepage}
The asynchronous pipeline concatenates the logical transductions in order, as shown in the pseudo-code below. 
A series of logical transductions generates the empty fields in the \texttt{Text2SQLTask} state object. 
Note that the overall execution of all state objects is performed concurrently, either per transduction step or per line of the pseudo-code.
\begin{samepage}
\begin{lstlisting}[language=Python,breaklines=true,showstringspaces=false,basicstyle=\fontsize{7}{7}\ttfamily]
text2sql("enrichment.keywords")<<text2sql("enrichment.description")<<text2sql("ddl_schema")
text2sql("enrichment.subquestions") << text2sql("question")
text2sql("enrichment.linked_schema") << text2sql("enrichment.ddl_schema", "enrichment.subquestions", "question")
text2sql("few_shots.question", "few_shots.sql_query")<<text2sql("ddl_schema", "enrichment.description") 
text2sql("few_shots") = text2sql.filter(valid_sql, "few_shots")
text2sql("sql_query") << text2sql("question", "enrichment", "few_shots")
text2sql("execution_result") = text2sql.amap(execute_sql_query)
\end{lstlisting}
\end{samepage}

\paragraph{Brid-Dev Result}
We evaluated execution accuracy on BIRD-dev using a simple prompt and a composite workflow that includes additional logical transductions with randomly generated few-shot examples (\textbf{FS}), keyword enrichment (\textbf{KW}), sub-question enrichment (\textbf{SQ}), schema linking enrichment (\textbf{SL}), and an optimized prompt template (\textbf{OP}).

Table \ref{tab:text2sql-bird} highlights the performance gains on average by including additional transductions on top of the base prompt.
We note that every individual transduction (i.e. FS / KW / SQ / SL / OP) does not in fact improve average performance on all models. 
However, techniques such as schema linking and optimized prompts yield greater improvements on a few models than the other approaches. 
Having a simple modeling framework, like \texttt{Agentics}, allows for more sophisticated augmentations and feedback loops to improve the creation of such programs.

\begin{table}[h!]
\centering
\begin{tabular}{c|ccc|}
\cline{2-4}
\multicolumn{1}{l|}{} & \multicolumn{3}{c|}{Model} \\ \hline
\multicolumn{1}{|c|}{Method} & \multicolumn{1}{c|}{\begin{tabular}[c]{@{}c@{}}llama-3-3\\ 70b-instruct\end{tabular}} & \multicolumn{1}{c|}{mistral-large} & \begin{tabular}[c]{@{}c@{}}llama-4-maverick\\ 17b-128e-instruct-fp8\end{tabular} \\ \hline
\multicolumn{1}{|c|}{P} & \multicolumn{1}{c|}{$50.51 \pm0.71$} & \multicolumn{1}{c|}{$47.20 \pm0.78$} & $53.88 \pm1.37$ \\ \hline
\multicolumn{1}{|c|}{P+FS} & \multicolumn{1}{c|}{\begin{tabular}[c]{@{}c@{}}$50.32 \pm0.52$\\ $(-0.19)$\end{tabular}} & \multicolumn{1}{c|}{\begin{tabular}[c]{@{}c@{}}$45.63 \pm0.9$\\ $(-1.57)$\end{tabular}} & \begin{tabular}[c]{@{}c@{}}$53.47 \pm0.56$\\ $(-0.41)$\end{tabular} \\ \hline
\multicolumn{1}{|c|}{P+KW} & \multicolumn{1}{c|}{\begin{tabular}[c]{@{}c@{}}$50.32 \pm0.32$\\ $(-0.19)$\end{tabular}} & \multicolumn{1}{c|}{\begin{tabular}[c]{@{}c@{}}$45.09 \pm0.81$\\ $(-2.11)$\end{tabular}} & \begin{tabular}[c]{@{}c@{}}$53.03 \pm0.82$\\ $(-0.85)$\end{tabular} \\ \hline
\multicolumn{1}{|c|}{P+SQ} & \multicolumn{1}{c|}{\begin{tabular}[c]{@{}c@{}}$51.58 \pm0.58$\\ $(+1.07)$\end{tabular}} & \multicolumn{1}{c|}{\begin{tabular}[c]{@{}c@{}}$46.15 \pm0.44$\\ $(-1.05)$\end{tabular}} & \begin{tabular}[c]{@{}c@{}}$52.33 \pm0.62$\\ $(-1.55)$\end{tabular} \\ \hline
\multicolumn{1}{|c|}{P+SL} & \multicolumn{1}{c|}{\begin{tabular}[c]{@{}c@{}}$52.75 \pm0.07$\\ $(+2.24)$\end{tabular}} & \multicolumn{1}{c|}{\begin{tabular}[c]{@{}c@{}}$49.54 \pm1.23$\\ $(+2.34)$\end{tabular}} & \begin{tabular}[c]{@{}c@{}}$50.46 \pm0.05$\\ $(-3.42)$\end{tabular} \\ \hline
\multicolumn{1}{|c|}{P+OP} & \multicolumn{1}{c|}{\begin{tabular}[c]{@{}c@{}}$54.90 \pm0.53$\\ $(+4.39)$\end{tabular}} & \multicolumn{1}{c|}{\begin{tabular}[c]{@{}c@{}}$48.93 \pm0.5$\\ $(+1.73)$\end{tabular}} & \begin{tabular}[c]{@{}c@{}}$53.15 \pm0.65$\\ $(-0.74)$\end{tabular} \\ \hline
\multicolumn{1}{|c|}{P+FS+KW+SQ+SL+OP} & \multicolumn{1}{c|}{\begin{tabular}[c]{@{}c@{}}$60.84 \pm0.7$\\ $(+10.33)$\end{tabular}} & \multicolumn{1}{c|}{\begin{tabular}[c]{@{}c@{}}$55.41 \pm1.2$\\ $(+8.21)$\end{tabular}} & \begin{tabular}[c]{@{}c@{}}$56.94 \pm0.23$\\ $(+3.06)$\end{tabular} \\ \hline
\end{tabular}
\caption{Execution accuracy on BIRD-dev, testing a prompt \textbf{P} with the simplified task vs. additional transductions in the composite workflow. \textbf{SL} replaces the full DDL schema with a linked schema, \textbf{KW} includes keyword topic modeling, \textbf{FS} randomly generates sql-validated few-shot question-query pairs, \textbf{SQ} extracts sub-questions, and \textbf{OP} optimizes the prompt template.}
\label{tab:text2sql-bird}
\end{table}

Across all models, the composite workflow consistently improved performance. Llama-3.3-70B showed the largest gain, improving from $50.51\% \pm 0.71$ to $60.84\% \pm 0.53$, a \textbf{10.33\%} increase. Mistral-Large improved by \textbf{8.21\%}, 
and Llama-4-maverick-17B saw a more modest gain of \textbf{3.06\%}. 
These results demonstrate that structured prompting and modular transductions in Agentics can significantly enhance execution accuracy.
Inspecting the impact of each individual component, \textbf{SL} and \textbf{OP} contributed 2.24\% and 4.49\% individually, while other components did not improve accuracy on their own. However, combining all components results in a higher gain of 10.33\%, exceeding the sum of individual improvements.

\paragraph{Archer-Dev Result}
The Archer benchmark presents challenges that require LLMs to perform commonsense, arithmetic, and hypothetical reasoning in order to correctly generate SQL expressions from natural language questions. 
In our experiments, we adopt a simplified pipeline that bypasses intermediate components and directly generates the \texttt{sql\_query} from the \texttt{ddl} and optional \texttt{commonsense\_knowledge} hints. 

Table \ref{tab:text2sql-archer} summarizes the evaluation result on Archer dataset.
\begin{table}[h!]
\small
\centering
\setlength{\tabcolsep}{8pt} 
\renewcommand{\arraystretch}{1.2} 
\begin{tabular}{lll}
\hline
\textbf{Model} & With Commonsense & Without Commonsense  \\
\hline
GPT-OSS-120B   & $35\% \pm 2$  & $28\% \pm 2 $ \\
Llama-3.3-70B & $15\% \pm 1$  & $15\% \pm 1$  \\
Llama-4-17B   & $30\% \pm 2$  & $25\% \pm 3.2$  \\
\hline
\end{tabular}
\caption{Execution accuracy on Archer-dev. We evaluted 104 dev problems with and without commonsense knowledge provided in the dataset.}
\label{tab:text2sql-archer}
\end{table}

We evaluate three LLMs, GPT-OSS-120B, Llama-3.3-70B, and Llama-4-17B
under two conditions, with and without commonsense knowledge.
Based on the average execution match over 10 trials, GPT-OSS-120B shows a clear improvement when commonsense knowledge is incorporated, increasing from 28\% to 35\%.
Llama-3.3-70B performs identically in both settings 15\%, and
Llama-4-17B also improved execution match performance from 25\% to 30\%.
For reference, the best reported performance on the Archer dev set is 38.46\% using the GPT-o1 model. Our results with GPT-OSS-120B demonstrate comparable performance, highlighting its effectiveness despite the simplified pipeline and smaller parameter models.

\subsubsection{Data Imputation}
The data imputation task is also a canonical example in generative structured data workflows.
We consider the task of filling in missing entries with plausible substitutions for a given tabular record containing one or more missing field values.
In \texttt{Agentics}, the input record can be defined as a semantic data type.
The missing attribute is then transduced from the known attributes using logical transduction.
We evaluate data imputation implemented in \texttt{Agentics} on the Buy and Restaurant datasets~\cite{10.14778/3574245.3574258}.
In the Buy dataset, given a product name and its description, the model is tasked with predicting the manufacturer as the missing value. In the Restaurant dataset, the goal is to infer the type of restaurant based on its name, address, and phone number.

The results of our framework in zero- and few-shot settings are presented in Table~\ref{tab:data_imputation_results}.
\begin{table}[h!]
\small
\centering
\setlength{\tabcolsep}{8pt} 
\renewcommand{\arraystretch}{1.2} 
\begin{tabular}{llccc}
\hline
\textbf{Dataset} & \textbf{Model} & zero-shot & few-shot \\
\hline
\multirow{3}{*}{Buy} 
  & gpt-oss   & 70.77  & 70.77 \\
  & llama-3.3 & 49.23  & 58.46 \\
  & llama-4   & 72.31  & 75.38 \\
\hline
\multirow{3}{*}{Restaurant} 
  & gpt-oss   & 79.07 & 80.23 \\
  & llama-3.3 & 67.44 & 48.84 \\
  & llama-4   & 66.28 & 47.67 \\
\hline
\end{tabular}
\caption{\small Accuracy results for different models and few-shot settings on Buy and Restaurant data imputation tasks.}
\label{tab:data_imputation_results}
\end{table}

The zero-shot accuracy performance of the GPT-OSS-120B and Llama-4-17B models is 70.77\% and 72.31\% on the Buy dataset, and 79.07\% and 66.28\% on the Restaurant dataset, respectively.
These results are comparable to the baseline performance of GPT-3-175B, which achieves 84.5\% on Buy and 70.9\% on Restaurant.

\subsection{Domain-Specific Multi-Choice Question Answer}
\subsubsection{Dataset}
We evaluate the \textit{FailureSensorIQ} dataset, a recently proposed domain-specific multiple-choice QA benchmark designed to assess understanding of sensor relationships and failure modes \citep{constantinides2025failuresensoriq}. 
We demonstrate the stable performance of structured prompting in \texttt{Agentics} across both the original single-correct MCQA and its perturbation variants.
The single-correct MCQA set contains 2,667 questions spanning various industrial assets, with questions around identifying the right sensor that can detect a given failure mode for a given asset, or identifying the right failure mode that a given asset and sensor can detect. This requires a nuanced understanding of sensor behavior, failure propagation, and asset-specific operational logic, and performing logical deductions across different knowledge about the asset. An example query is given as follows.
\begin{lstlisting}[language=Python,breaklines=true,showstringspaces=false,basicstyle=\fontsize{7}{7}\ttfamily]
{
  "Question": "For electric motor, if a failure event rotor windings fault occurs, which sensor out of the choices is the most relevant sensor regarding the occurrence of the failure event?",
  "Options":["A. partial discharge", "B. resistance",  "C. oil debris", "D. current", "E. voltage"]
}
\end{lstlisting}

The perturbations are designed to be knowledge-invariant. The \textit{simple perturbation} involves renaming option letters, while the \textit{complex perturbation} combines option letter renaming with question rephrasing.

\subsubsection{Methods} 
Baseline results are obtained from the 
leaderboard that evaluates the standardized prompts with at most three trials for invalid responses. 

Our approach leverages the \texttt{Agentics} framework to perform schema-constrained transduction from structured input to structured output. Each input instance is represented using a subset of fields from the FailureSensorIQ schema—specifically, the question, asset name, option ids, options, and subject—which are sufficient to ground the reasoning process in both linguistic and domain-specific context.

To improve inference efficiency, \texttt{Agentics} supports \textit{parallel batch execution} via the \texttt{amap} operation. This distributes multiple structured prompts across concurrent model invocations, significantly reducing total runtime. Unlike sequential prompting, which processes one question at a time, batch transduction enables scalable evaluation and deployment.

\paragraph{Logical Self Transduction}
Each problem in \textit{FailureSensorIQ} dataset has multiple fields,
answer \texttt{options}, industrial \texttt{asset\_name}, 
\texttt{relevancy} context, \texttt{question\_type}, \texttt{subject} of the question,
and the \texttt{answer} to the question.
The structured prompting compose the input prompt from the data model and the Agentics framework performs self-transduction from all other fields to the \texttt{answer} with additional instruction as follows.
\begin{samepage}
\begin{lstlisting}[language=Python,breaklines=true,showstringspaces=false,basicstyle=\fontsize{7}{7}\ttfamily]
fsiq_benchmark = await fsiq_benchmark.self_transduction(
    input_fields=["question", "options", "option_ids",  "asset_name", "relevancy", "question_type", "subject"],
    output_fields=["answer"],
    instructions=("Read the input questions, all possible answers, and background task information. This is a multiple choice test, where one of the options is true and the others are false. Select the answer with the highest likelihood of being correct, and return it along with a confidence score and a verbal assessment explaining your judgment."))
\end{lstlisting}
\end{samepage}

\subsubsection{Experiment Setting}
We evaluate four models ranging from 8B to 405B parameters: 
\textbf{Qwen3-8B} \citep{yang2025qwen3}, \textbf{Llama-3.3-70B-Instruct} \citep{touvron2023llama}, \textbf{Mistral-Large-Instruct-2407} \citep{jiang2024mixtral}, \textbf{Llama-3-405B-Instruct} \citep{touvron2023llama}.
Models are tested using both the original FailureSensorIQ baseline pipeline and the \texttt{Agentics} framework. The baseline uses loosely formatted natural language prompts and retries up to three times if the output is invalid. In contrast, \texttt{Agentics} uses structured prompting and schema-constrained decoding.
To measure execution time, we host \texttt{Qwen3-8B} on a dedicated node with an \texttt{A100 80GB GPU} running \texttt{VLLM}. 
Other models are accessed via cloud computing platform. 
We vary batch sizes to assess scalability and throughput.

\subsubsection{Experiment Results}

\begin{table}[h!]
\small
    \centering
    \begin{tabular}{l|c|c|c}
        Model & \# Params & Baseline & \texttt{Agentics} \\
        \hline
        Qwen3-8B & 8B & 45.86 & \makecell{60.18  \color{blue}(+14.32)} \\
        Llama-3.3-70B & 70B & 41.69  & \makecell{50.73  \color{blue}(+9.04)} \\
        Mistral-Large & 123B & 50.09 & \makecell{58.41  \color{blue}(+8.32)} \\
        Llama-3-405B & 405B & 51.26 & \makecell{52.90  \color{blue}(+1.64)} \\
    \end{tabular}
    \vspace{0.2em}
    \caption{\small Accuracy (\%) of on 2,667 FailureSensorIQ instances.}
    \label{tab:failuresensoriq-acc}
\end{table}

\paragraph{Accuracy}
Table~\ref{tab:failuresensoriq-acc} shows the accuracy comparison. 
Agentics consistently improved performance over the baseline prompt performance for all models. 
Qwen3-8B showed the largest gain, improving from 45.86\% to 60.18\%. 
Llama-3.3-70B and Mistral-Large saw gains of \textbf{9.04\%} and \textbf{8.32\%}, respectively. 
The largest model, Llama-3-405B, showed a modest improvement of \textbf{1.64\%}, 
Notably, \texttt{Qwen3-8B} achieves a major improvement of \textbf{14.32\%}, 
placing it just behind \texttt{openai-o1}, 60.4\%. 
These results suggest that prompting through logical transduction helps unlock latent reasoning capabilities, even in models with limited parameter counts.

\paragraph{Robustness}
\texttt{Agentics} demonstrates robustness against knowledge-invariant perturbations.
We follow \texttt{FailureSensorIQ's} perturbation study using the \texttt{Agentics} framework to study if there are any robustness benefits that comes with the structured workflows that our framework offers. We experiment with the following knowledge invariant perturbations: 
(1) Option letter renaming; changing the option letters from of A., B., C., to other letters like P., Q., R. We'll call this ``Simple'' Perturbation.
(2) Option letter renaming and question rephrasing done by an LLM. We'll call this ``Complex'' Perturbation.
We use the same perturbed datasets from the original paper.

In the original \texttt{FailureSensorIQ} experiments, all models experienced significant drops in performance, ranging from 5\% to 20\% .
However, \texttt{Agentics} showed minimal change, ranging from just 0.08\% to 0.19\% under the simple perturbation, and from 2.21\% to 2.44\% under the complex perturbation.

\begin{figure}[htbp]
    \centering
    \includegraphics[width=0.48\linewidth]{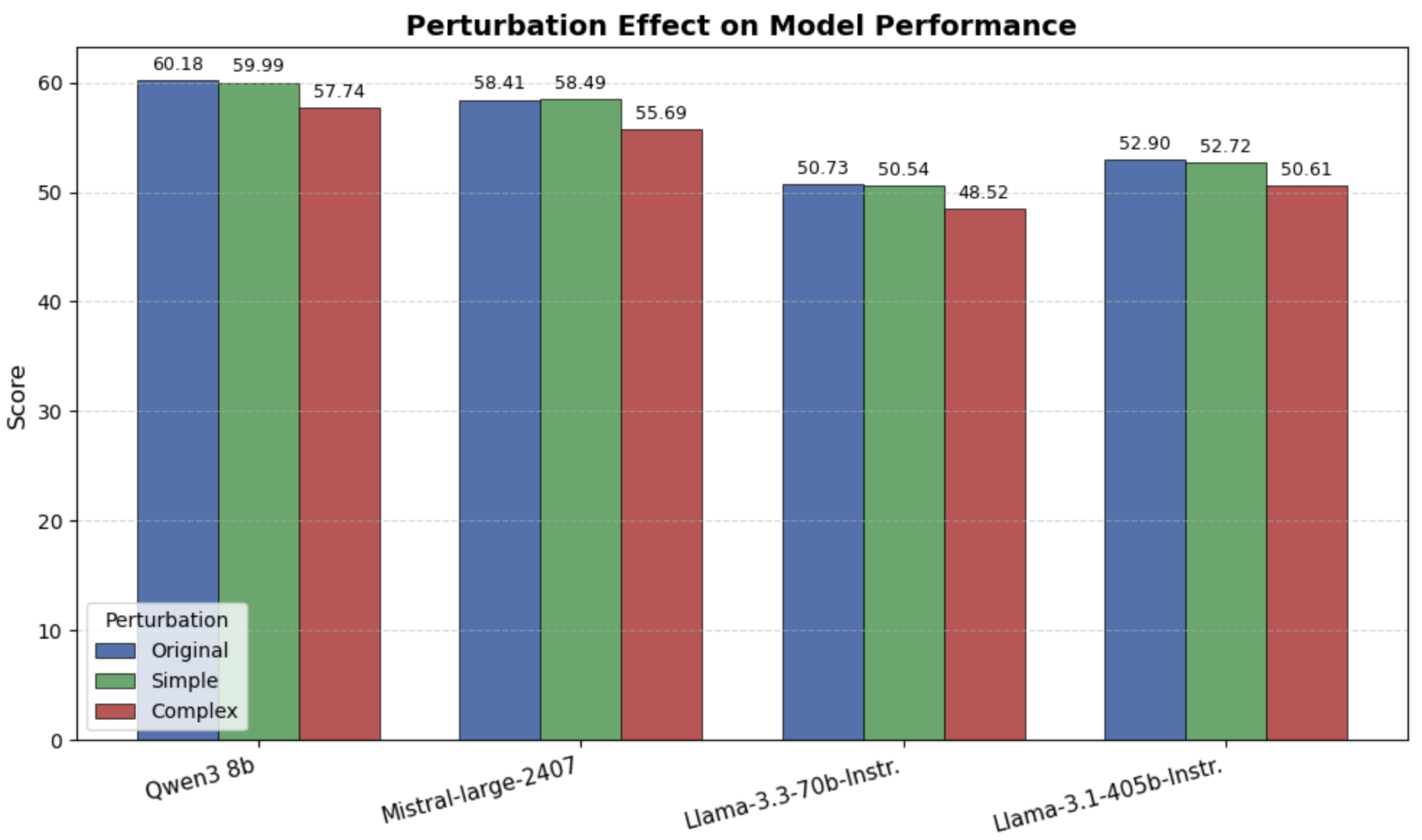}
    \label{figsupp:enter-label}
    \caption{Domain Specific MCQA Perturbation Results.}
    \label{fig:mcqa-exp}
\end{figure}

\subsection{Data Driven Scientific Discovery}
The DiscoveryBench \citep{majumder2025discoverybench} dataset is a relatively recent benchmark designed to evaluate the full pipeline of data-driven scientific discovery. It contains 264 real-world discovery tasks spanning six scientific domains, including sociology, engineering, and various experimental sciences. Each domain is defined by structured data, accompanying metadata, and a natural-language discovery goal. To solve the DiscoveryBench tasks, it is crucial to interpret data semantically, identify salient variables, generate hypotheses, and compose multi-step reasoning workflows.

\begin{table}[h!]
\centering
\caption{Baseline Performance Comparison on HMS Metrics}
\label{tab:baselines-hms}
\begin{tabular}{l r}
\toprule
\textbf{Agents} & \textbf{HMS} \\
\midrule
NoDataGuess (GPT-4o) & 0.0 \\
CodeGen (GPT-4o) & 15.5 \\
React (GPT-4o) & 15.4 \\
DataVoyager (GPT-4o) & 15.4 \\
Agentics (GPT-OSS120B) & 21.4 \\
Reflexion (Oracle) & 24.5 \\
\bottomrule
\end{tabular}
\end{table}

Since the input of this dataset is in CSV files with numerical fields, many of the state of the art approaches rely on code generation and execution, namely the agentic AI systems produce Python code and execute it on the numeric heavy dataset, interpret the result, and refine the hypothesis. Similar to other structured data workflows, failure to constrain the type semantics results in brittle pipelines and limiting the scalability.

In \texttt{Agentics} framework, we follow the schema constrained composition of the data transformations with logical transduction and asynchronous MapReduce. Each row of the raw dataset first maps to a typed object that extracts evidence via asynchronous Map transductions. Then, typed intermediate states are aggregated through asynchronous Reduce transduction that synthesizes into a final state with the Answer type.

We evaluate the hypothesis match score (HMS) defined in the paper.
The resulting pipeline processes all 285 discovery tasks under one hour with batch processing 20 parallel calls. Table \ref{tab:baselines-hms} shows the final scores. 
An agent using GPTOSS-120B implemented in \texttt{Agentics} shows an HMS score of 21.4 in the DB-REAL category, which is higher than other agents in the baseline using GPT-4o.

\subsection{Prompt Optimization}
\label{subsec:exp_prompt_optimization}
Automatic prompt optimization (APO) is an essential task because LLM performance is highly sensitive to prompt structure, tone, and formatting. 
The prompt function in Definition~\ref{def:prompt-function} plays a central role in logical transduction, which can be conceptualized as a negotiation of meaning between agents.
Among various APO approaches \citep{ramnath-etal-2025-systematic,li2025survey}, logical transduction algebra naturally supports OPRO-style methods \citep{yang2023large,ye2024prompt,opsahl2024optimizing}, which follow a local search procedure based on a cycle of generate-select-evaluate.
In the \texttt{Agentics} framework, the candidate prompt templates are generated by logical transduction from the meta-type that describes the prompt optimization task. 
Additionally, logical transduction algebra naturally parallelizes local search, such as generating and evaluating candidate prompts.

\subsubsection{Dataset}
We evaluated GSM8K \citep{cobbe2021training} and FailureSensorIQ \citep{constantinides2025failuresensoriq} benchmarks. 
The former enables direct comparison with existing APO techniques. For GSM8K, we use the first 500 training examples for prompt optimization and evaluate the final performance on the full test set.
The latter benchmark allows us to assess additional gains beyond the default prompting provided by logical transduction. 
We randomize the dataset, selecting 500 examples for training and fixing 1,000 examples for the test set.
In APO, the training set is used to construct demonstrations for candidate generation and to evaluate candidates during optimization.

\subsubsection{Methods}
We designed the meta-prompt for prompt generation by analyzing those from OPRO \citep{yang2023large} and PE2 \citep{ye2024prompt}\footnote{Relevant open-source implementations are \url{https://github.com/psunlpgroup/GreaterPrompt}, \url{https://github.com/stanfordnlp/dspy}, \url{https://github.com/google-deepmind/opro}.}.
Our focus is on the impact of local search hyperparameters on final test performance and the overall running time. 
Following \citet{yang2023large}, the optimization meta-prompt includes 3 demonstration tasks and the top 8 candidates, along with their evaluated scores in ascending order. In our experiments, we vary the number of parallel candidate generations from 1 to 8, and the batch size for asynchronous LLM API calls from 1 to 20.

\subsubsection{Optimization Scope}
Following the common experiment settings \citep{yang2023large,ye2024prompt,zheng2025greaterprompt,spiess2025autopdl}, 
we optimize the prompt template 
for the zero-shot chain of thought technique \citep{kojima2022large} by specifying the role, goal, expected output types, and the description of input task as well.
This can be flexibly incorporated into APO, as candidate generation follows logical transduction from field descriptions to a prompt template. 
We initialize all prompt components as empty strings and iteratively refine them.

To understand the impact of prompt components, 
we manually constructed a prompt using the parameters and evaluated it with the LLaMA-3.3-70B model.
The default prompt, which only shows the input and output field names, achieved just 5\% accuracy. 
Adding an expected output description increased accuracy to 66\%, 
and including all prompt parameters further improved it to 67\%. 
These results suggest that optimizing both output expectations and imperative phrasing is essential for effective prompt design.

\subsubsection{Performance Improvement}
We present experiment results showing the improvement of test scores on both the GSM8K and MCQA datasets. 

\begin{itemize}
    \item The optimized prompt templates improve the test scores to 85 for Llama-3.3-70B and 91 for Qwen3-8B, which is consistent with findings reported in the literature.
For the FailureSensorIQ dataset, the test score of Llama-3.3-70B was further improved to 54\%.
\item The plots with batch size 1 indicate that the Llama-3.3-70B model discovered better prompts when using a larger batch size. 
In contrast, the Qwen3-8B model identified a good prompt with a smaller batch size. However, it's important to note that the Qwen3 series models have been trained on data derived from the GSM8K dataset as well as other math-related datasets. This prior exposure may contribute to their relatively strong performance on GSM8K, even with smaller batch sizes or less prompt optimization effort, and should be considered when interpreting the results.
\end{itemize}

\begin{figure}[h]
    \centering
    \begin{subfigure}{0.46\textwidth} 
        \centering
        \includegraphics[width=\linewidth]{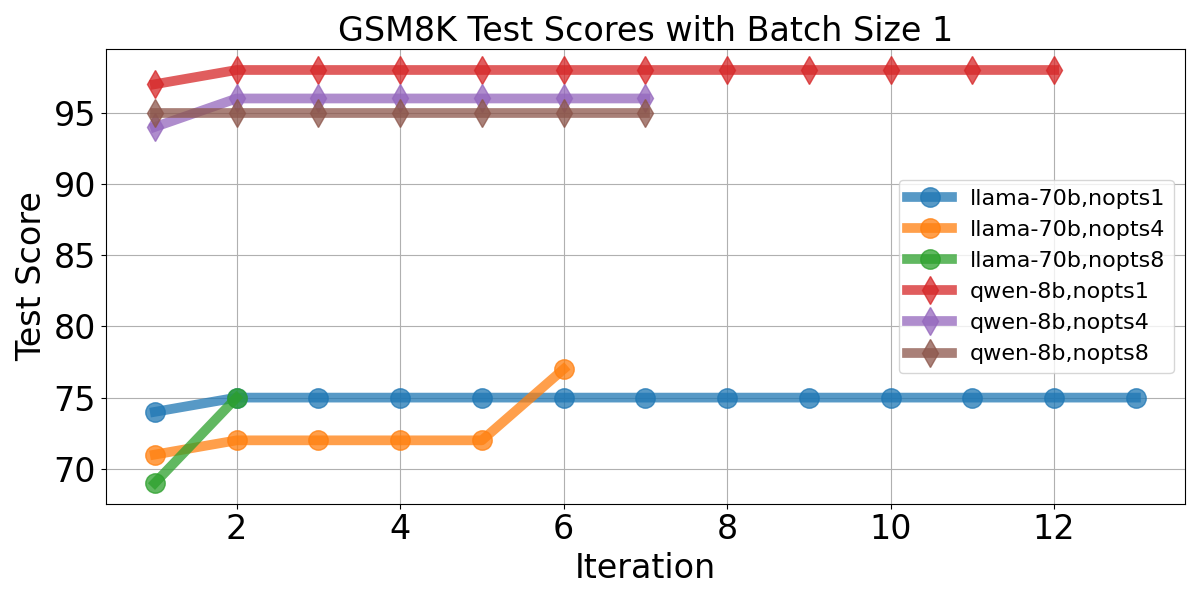} 
    \end{subfigure}
    \begin{subfigure}{0.46\textwidth}
        \centering
        \includegraphics[width=\linewidth]{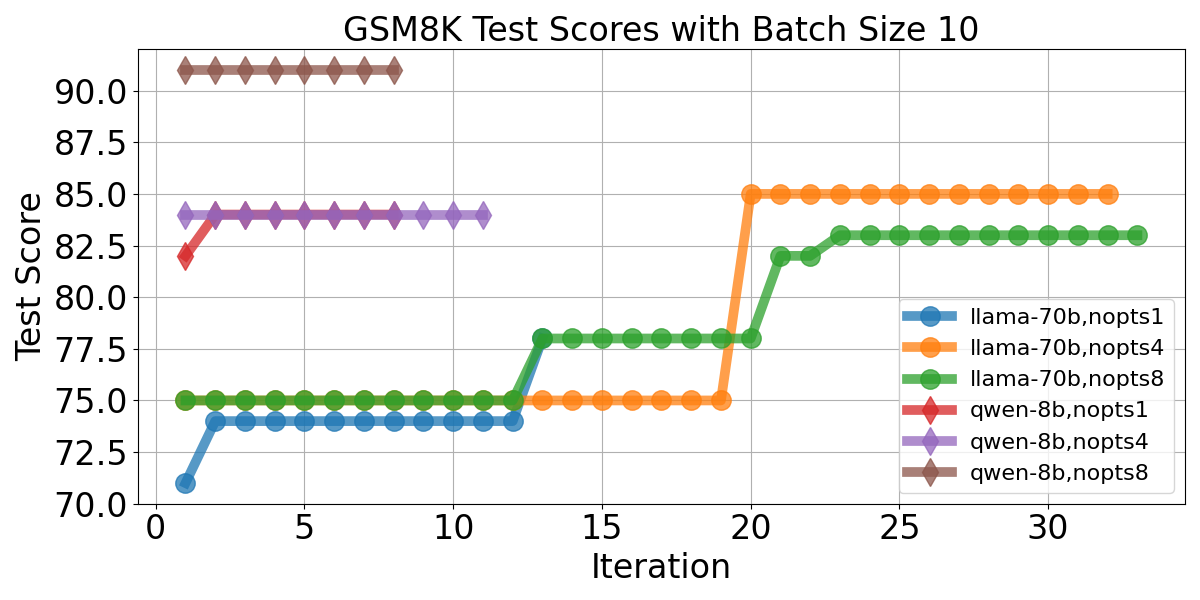} 
    \end{subfigure}
    \begin{subfigure}{0.46\textwidth} 
        \centering
        \includegraphics[width=\linewidth]{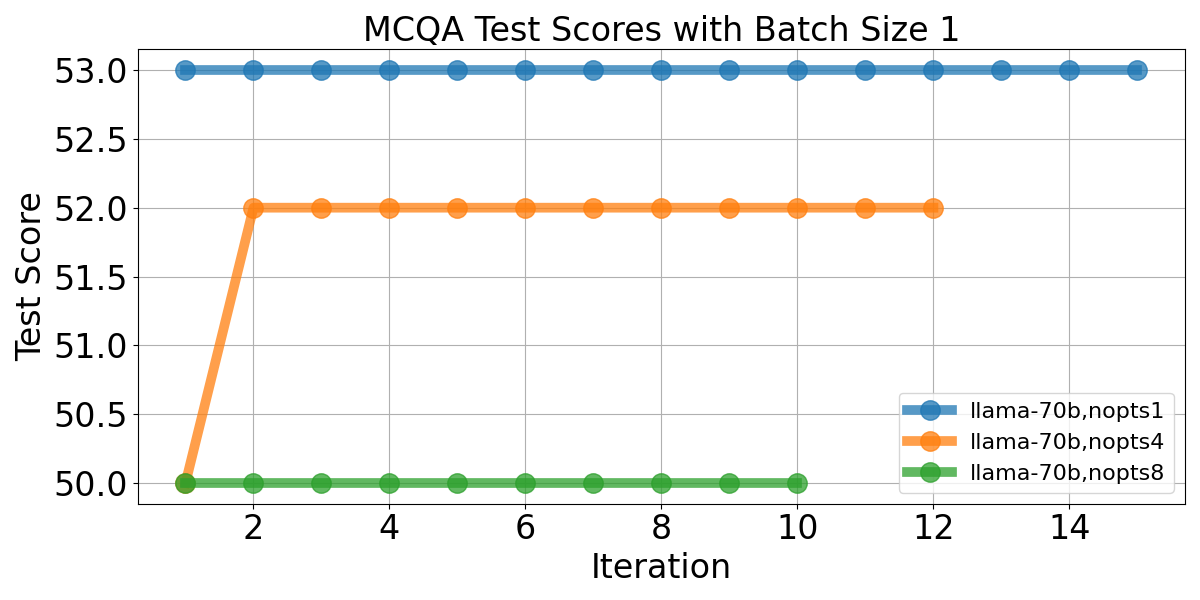} 
    \end{subfigure}
    \begin{subfigure}{0.46\textwidth}
        \centering
        \includegraphics[width=\linewidth]{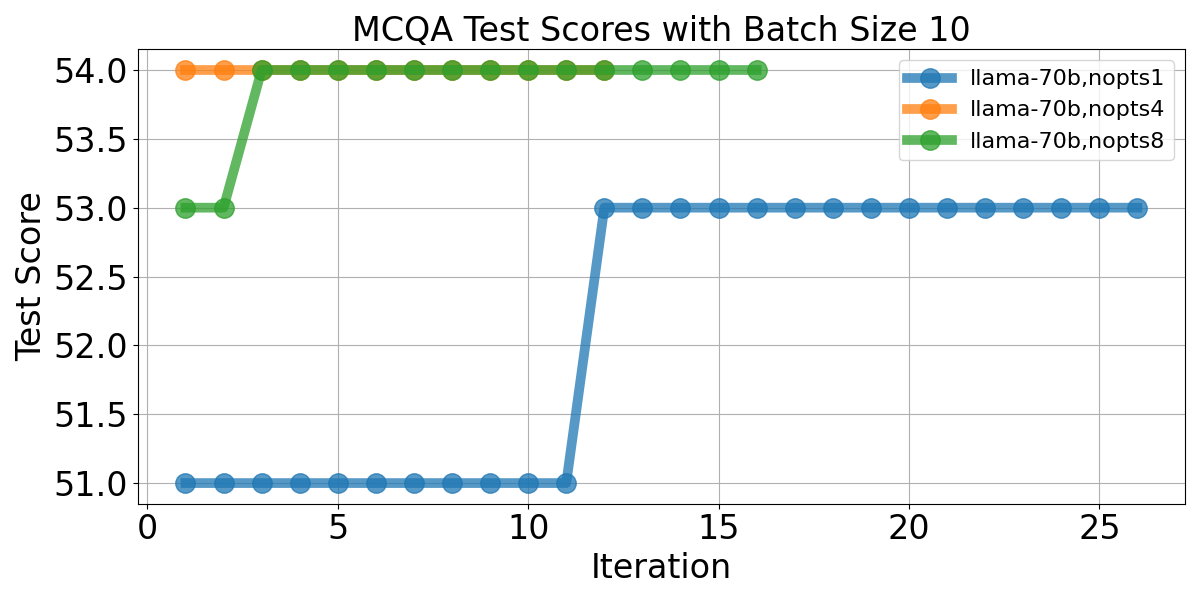} 
    \end{subfigure}    
    \caption{Improvement of test score over iterations: The x-axis represents the number of iterations, and the y-axis shows the test score evaluated using the best prompt template found up to that iteration.}
\end{figure}

\subsubsection{Running Time}
Figure \ref{fig:running_time} shows the average running time per iteration during prompt optimization for the GSM8K amd MCQA dataset.
\begin{itemize}
    \item In GSM8K, we observe that the improvement in running time is most significant at a batch size of 4, after which the gap gradually decreases.
Beyond a batch size of 10, the running time saturates or increases due to the overhead caused by invalid outputs in the batch results.

\item The running time results from the MCQA dataset follow similar trends to those observed with the GSM8K dataset. For smaller models like Qwen3-8B, if the model fails to follow instructions and produce output in the expected structured format, the transduction step often fails to return a valid JSON object. This leads to increased running time due to the additional overhead of the error recovery process.

\end{itemize}

\begin{figure}[h]
    \centering
    \begin{subfigure}{0.46\textwidth} 
        \centering
        \includegraphics[width=\linewidth]{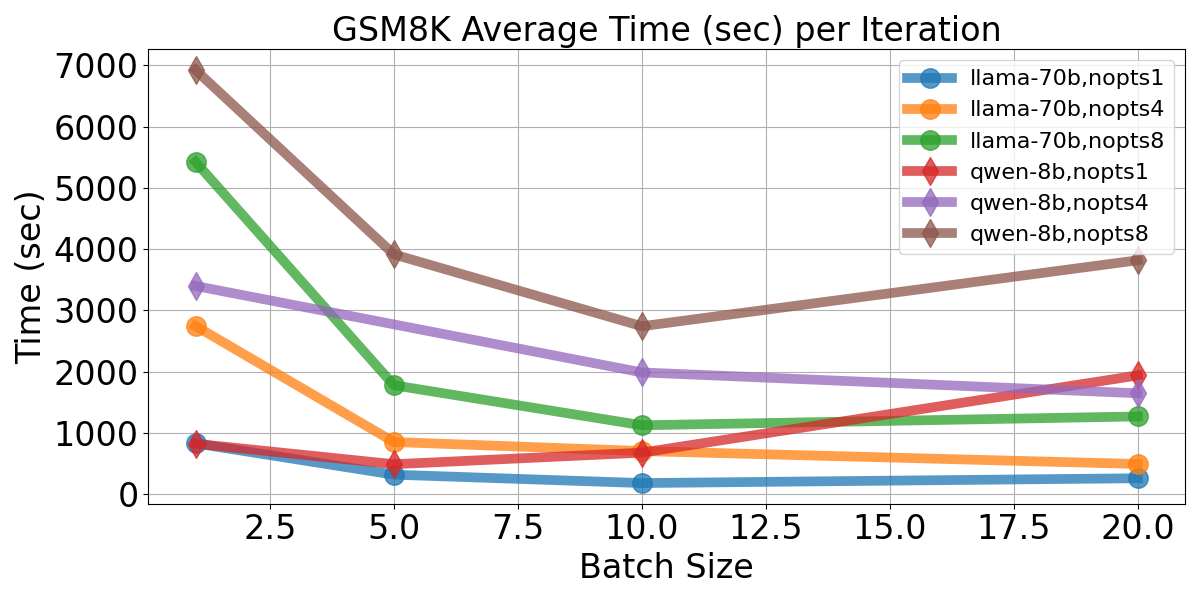} 
    \end{subfigure}
    \begin{subfigure}{0.46\textwidth}
        \centering
        \includegraphics[width=\linewidth]{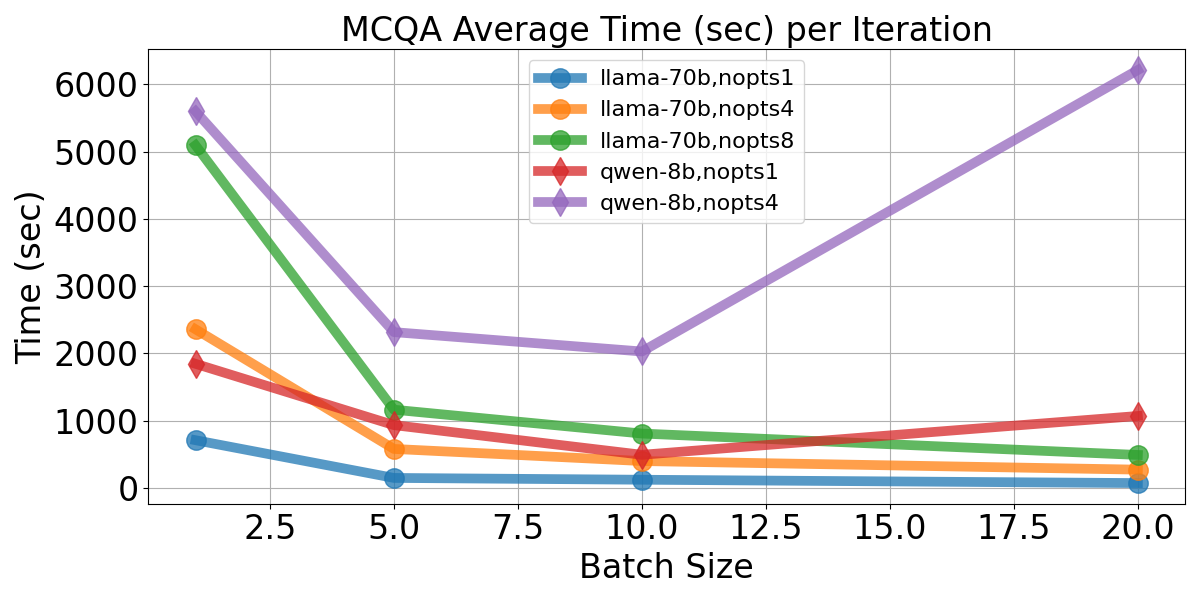} 
    \end{subfigure}
    \caption{Average running time per iteration: The x-axis represents the batch size of the asynchronous execution, and the y-axis shows the average running time in seconds.}
    \label{fig:running_time}
\end{figure}
\section{Conclusion}
We present a principled framework for agentic AI, grounded in a novel logical transduction algebra and a scalable asynchronous programming model. This framework redefines how agents interact with data through a declarative, type-driven approach, enabling robust and efficient execution across diverse tasks.
Despite its strengths, the current framework has several limitations. First, it primarily focuses on type-driven transduction, which may not generalize well to tasks requiring richer contextual understanding or instruction-following behavior. Many real-world tasks involve implicit signals or external context that go beyond type annotations. Second, the integration of tool usage remains underexplored.
Future work will explore several promising directions. 
One is transduction that incorporates instruction or retrieval. Another is enhanced tool integration, enabling agents to invoke external tools within the transduction pipeline. Additionally, extending the framework to support interoperability with other agentic AI frameworks could unlock broader capabilities of agentic systems.

\bibliography{ref-checked}

@article{acharya2025agentic,
  title={Agentic ai: Autonomous intelligence for complex goals--a comprehensive survey},
  author={Acharya, Deepak Bhaskar and Kuppan, Karthigeyan and Divya, B},
  journal={IEEe Access},
  year={2025},
  publisher={IEEE}
}

@inproceedings{
agashe2025agent,
title={Agent S: An Open Agentic Framework that Uses Computers Like a Human},
author={Saaket Agashe and Jiuzhou Han and Shuyu Gan and Jiachen Yang and Ang Li and Xin Eric Wang},
booktitle={The Thirteenth International Conference on Learning Representations},
year={2025},
url={https://openreview.net/forum?id=lIVRgt4nLv}
}

@article{androutsopoulos1995natural,
  title={Natural language interfaces to databases--an introduction},
  author={Androutsopoulos, Ion and Ritchie, Graeme D and Thanisch, Peter},
  journal={Natural language engineering},
  volume={1},
  number={1},
  pages={29--81},
  year={1995},
  publisher={Cambridge University Press}
}

@inproceedings{
asai2024selfrag,
title={Self-{RAG}: Learning to Retrieve, Generate, and Critique through Self-Reflection},
author={Akari Asai and Zeqiu Wu and Yizhong Wang and Avirup Sil and Hannaneh Hajishirzi},
booktitle={The Twelfth International Conference on Learning Representations},
year={2024},
url={https://openreview.net/forum?id=hSyW5go0v8}
}

@article{cobbe2021training,
  title={Training verifiers to solve math word problems},
  author={Cobbe, Karl and Kosaraju, Vineet and Bavarian, Mohammad and Chen, Mark and Jun, Heewoo and Kaiser, Lukasz and Plappert, Matthias and Tworek, Jerry and Hilton, Jacob and Nakano, Reiichiro and others},
  journal={arXiv preprint arXiv:2110.14168},
  year={2021}
}

@article{codd1970relational,
  title={A relational model of data for large shared data banks},
  author={Codd, Edgar F},
  journal={Communications of the ACM},
  volume={13},
  number={6},
  pages={377--387},
  year={1970},
  publisher={ACM New York, NY, USA}
}

@inproceedings{
constantinides2025failuresensoriq,
title={FailureSensor{IQ}: A Multi-Choice {QA} Dataset for Understanding Sensor Relationships and Failure Modes},
author={Christodoulos Constantinides and Dhaval C Patel and Shuxin Lin and Claudio Guerrero and SUNIL DAGAJIRAO PATIL and Jayant Kalagnanam},
booktitle={The Thirty-ninth Annual Conference on Neural Information Processing Systems Datasets and Benchmarks Track},
year={2025},
url={https://openreview.net/forum?id=9KfkMAy2ut}
}

@article{dean2008mapreduce,
  title={MapReduce: simplified data processing on large clusters},
  author={Dean, Jeffrey and Ghemawat, Sanjay},
  journal={Communications of the ACM},
  volume={51},
  number={1},
  pages={107--113},
  year={2008},
  publisher={ACM New York, NY, USA}
}

@inproceedings{dibia2024autogen,
  title={AUTOGEN STUDIO: A No-Code Developer Tool for Building and Debugging Multi-Agent Systems},
  author={Dibia, Victor and Chen, Jingya and Bansal, Gagan and Syed, Suff and Fourney, Adam and Zhu, Erkang and Wang, Chi and Amershi, Saleema},
  booktitle={Proceedings of the 2024 Conference on Empirical Methods in Natural Language Processing: System Demonstrations},
  pages={72--79},
  year={2024}
}

@inproceedings{10.5555/3692070.3692521,
author = {Dong, Yi and Mu, Ronghui and Jin, Gaojie and Qi, Yi and Hu, Jinwei and Zhao, Xingyu and Meng, Jie and Ruan, Wenjie and Huang, Xiaowei},
title = {Position: building guardrails for large language models requires systematic design},
year = {2024},
publisher = {JMLR.org},
booktitle = {Proceedings of the 41st International Conference on Machine Learning},
articleno = {451},
numpages = {20},
location = {Vienna, Austria},
series = {ICML'24}
}

@article{han2024llm,
  title={LLM Multi-Agent Systems: Challenges and Open Problems},
  author={Han, Shanshan and Zhang, Qifan and Yao, Yuhang and Jin, Weizhao and Xu, Zhaozhuo and He, Chaoyang},
  journal={CoRR},
  year={2024}
}

@inproceedings{heck2024data,
  title={What about the data? a mapping study on data engineering for ai systems},
  author={Heck, Petra},
  booktitle={Proceedings of the IEEE/ACM 3rd International Conference on AI Engineering-Software Engineering for AI},
  pages={43--52},
  year={2024}
}

@article{hendrix1978developing,
  title={Developing a natural language interface to complex data},
  author={Hendrix, Gary G and Sacerdoti, Earl D and Sagalowicz, Daniel and Slocum, Jonathan},
  journal={ACM Transactions on Database Systems (TODS)},
  volume={3},
  number={2},
  pages={105--147},
  year={1978},
  publisher={ACM New York, NY, USA}
}

@inproceedings{
hendrycks2021measuring,
title={Measuring Massive Multitask Language Understanding},
author={Dan Hendrycks and Collin Burns and Steven Basart and Andy Zou and Mantas Mazeika and Dawn Song and Jacob Steinhardt},
booktitle={International Conference on Learning Representations},
year={2021},
url={https://openreview.net/forum?id=d7KBjmI3GmQ}
}

@article{hopkins2022structured,
  title={Structured data vs. unstructured data in machine learning prediction models for suicidal behaviors: A systematic review and meta-analysis},
  author={Hopkins, Danielle and Rickwood, Debra J and Hallford, David J and Watsford, Clare},
  journal={Frontiers in digital health},
  volume={4},
  pages={945006},
  year={2022},
  publisher={Frontiers Media SA}
}

@article{hosseini2025role,
  title={The role of agentic ai in shaping a smart future: A systematic review},
  author={Hosseini, Soodeh and Seilani, Hossein},
  journal={Array},
  pages={100399},
  year={2025},
  publisher={Elsevier}
}

@incollection{huang2025ai,
  title={AI Agent Tools and Frameworks},
  author={Huang, Ken and Huang, Jerry},
  booktitle={Agentic AI: Theories and Practices},
  pages={23--50},
  year={2025},
  publisher={Springer}
}

@article{jiang2024mixtral,
  title={Mixtral of experts},
  author={Jiang, Albert Q and Sablayrolles, Alexandre and Roux, Antoine and Mensch, Arthur and Savary, Blanche and Bamford, Chris and Chaplot, Devendra Singh and Casas, Diego de las and Hanna, Emma Bou and Bressand, Florian and others},
  journal={arXiv preprint arXiv:2401.04088},
  year={2024}
}

@article{johnson2023mimic,
  title={MIMIC-IV, a freely accessible electronic health record dataset},
  author={Johnson, Alistair EW and Bulgarelli, Lucas and Shen, Lu and Gayles, Alvin and Shammout, Ayad and Horng, Steven and Pollard, Tom J and Hao, Sicheng and Moody, Benjamin and Gow, Brian and others},
  journal={Scientific data},
  volume={10},
  number={1},
  pages={1},
  year={2023},
  publisher={Nature Publishing Group UK London}
}

@inproceedings{khattab2024dspy,
  title={Dspy: Compiling declarative language model calls into state-of-the-art pipelines},
  author={Khattab, Omar and Singhvi, Arnav and Maheshwari, Paridhi and Zhang, Zhiyuan and Santhanam, Keshav and Haq, Saiful and Sharma, Ashutosh and Joshi, Thomas T and Moazam, Hanna and Miller, Heather and others},
  booktitle={The Twelfth International Conference on Learning Representations},
  year={2024}
}

@article{kojima2022large,
  title={Large language models are zero-shot reasoners},
  author={Kojima, Takeshi and Gu, Shixiang Shane and Reid, Machel and Matsuo, Yutaka and Iwasawa, Yusuke},
  journal={Advances in neural information processing systems},
  volume={35},
  pages={22199--22213},
  year={2022}
}

@article{li2023can,
  title={Can llm already serve as a database interface? a big bench for large-scale database grounded text-to-sqls},
  author={Li, Jinyang and Hui, Binyuan and Qu, Ge and Yang, Jiaxi and Li, Binhua and Li, Bowen and Wang, Bailin and Qin, Bowen and Geng, Ruiying and Huo, Nan and others},
  journal={Advances in Neural Information Processing Systems},
  volume={36},
  pages={42330--42357},
  year={2023}
}

@article{li2025survey,
  title={A survey of automatic prompt engineering: An optimization perspective},
  author={Li, Wenwu and Wang, Xiangfeng and Li, Wenhao and Jin, Bo},
  journal={arXiv preprint arXiv:2502.11560},
  year={2025}
}

@article{madaan2023self,
  title={Self-refine: Iterative refinement with self-feedback},
  author={Madaan, Aman and Tandon, Niket and Gupta, Prakhar and Hallinan, Skyler and Gao, Luyu and Wiegreffe, Sarah and Alon, Uri and Dziri, Nouha and Prabhumoye, Shrimai and Yang, Yiming and others},
  journal={Advances in Neural Information Processing Systems},
  volume={36},
  pages={46534--46594},
  year={2023}
}

@inproceedings{
majumder2025discoverybench,
title={DiscoveryBench: Towards Data-Driven Discovery with Large Language Models},
author={Bodhisattwa Prasad Majumder and Harshit Surana and Dhruv Agarwal and Bhavana Dalvi Mishra and Abhijeetsingh Meena and Aryan Prakhar and Tirth Vora and Tushar Khot and Ashish Sabharwal and Peter Clark},
booktitle={The Thirteenth International Conference on Learning Representations},
year={2025},
url={https://openreview.net/forum?id=vyflgpwfJW}
}

@article{moshkovich2025taming,
  title={Taming uncertainty via automation: Observing, analyzing, and optimizing agentic ai systems},
  author={Moshkovich, Dany and Zeltyn, Sergey},
  journal={arXiv preprint arXiv:2507.11277},
  year={2025}
}

@article{10.14778/3574245.3574258,
author = {Narayan, Avanika and Chami, Ines and Orr, Laurel and R\'{e}, Christopher},
title = {Can Foundation Models Wrangle Your Data?},
year = {2022},
issue_date = {December 2022},
publisher = {VLDB Endowment},
volume = {16},
number = {4},
issn = {2150-8097},
url = {https://doi.org/10.14778/3574245.3574258},
doi = {10.14778/3574245.3574258},
journal = {Proc. VLDB Endow.},
month = dec,
pages = {738–746},
numpages = {9}
}

@book{ohdsi2019book,
  title={The Book of OHDSI: Observational Health Data Sciences and Informatics},
  author={OHDSI},
  year={2019},
  publisher={OHDSI}
}

@inproceedings{opsahl2024optimizing,
  title={Optimizing Instructions and Demonstrations for Multi-Stage Language Model Programs},
  author={Opsahl-Ong, Krista and Ryan, Michael and Purtell, Josh and Broman, David and Potts, Christopher and Zaharia, Matei and Khattab, Omar},
  booktitle={Proceedings of the 2024 Conference on Empirical Methods in Natural Language Processing},
  pages={9340--9366},
  year={2024}
}

@article{ouyang2022training,
  title={Training language models to follow instructions with human feedback},
  author={Ouyang, Long and Wu, Jeffrey and Jiang, Xu and Almeida, Diogo and Wainwright, Carroll and Mishkin, Pamela and Zhang, Chong and Agarwal, Sandhini and Slama, Katarina and Ray, Alex and others},
  journal={Advances in neural information processing systems},
  volume={35},
  pages={27730--27744},
  year={2022}
}

@article{pan2024automatically,
  title={Automatically correcting large language models: Surveying the landscape of diverse automated correction strategies},
  author={Pan, Liangming and Saxon, Michael and Xu, Wenda and Nathani, Deepak and Wang, Xinyi and Wang, William Yang},
  journal={Transactions of the Association for Computational Linguistics},
  volume={12},
  pages={484--506},
  year={2024},
  publisher={MIT Press One Broadway, 12th Floor, Cambridge, Massachusetts 02142, USA~…}
}

@article{parciak2150schema,
  title={Schema Matching with Large Language Models: an Experimental Study},
  author={Parciak, Marcel and Vandevoort, Brecht and Neven, Frank and Peeters, Liesbet M and Vansummeren, Stijn},
  journal={Proceedings of the VLDB Endowment. ISSN},
  volume={2150},
  pages={8097},
  year={2024}

}

@incollection{putri2024artificial,
  title={Artificial intelligence and machine learning in digital transformation: Exploring the role of AI and ML in reshaping businesses and information systems},
  author={Putri, Rani Kurnia and Athoillah, Muhammad},
  booktitle={Advances in Digital Transformation-Rise of Ultra-Smart Fully Automated Cyberspace},
  year={2024},
  publisher={IntechOpen}
}

@inproceedings{ramnath-etal-2025-systematic,
    title = "A Systematic Survey of Automatic Prompt Optimization Techniques",
    author = "Ramnath, Kiran  and
      Zhou, Kang  and
      Guan, Sheng  and
      Mishra, Soumya Smruti  and
      Qi, Xuan  and
      Shen, Zhengyuan  and
      Wang, Shuai  and
      Woo, Sangmin  and
      Jeoung, Sullam  and
      Wang, Yawei  and
      Wang, Haozhu  and
      Ding, Han  and
      Lu, Yuzhe  and
      Xu, Zhichao  and
      Zhou, Yun  and
      Srinivasan, Balasubramaniam  and
      Yan, Qiaojing  and
      Chen, Yueyan  and
      Ding, Haibo  and
      Xu, Panpan  and
      Cheong, Lin Lee",
    editor = "Christodoulopoulos, Christos  and
      Chakraborty, Tanmoy  and
      Rose, Carolyn  and
      Peng, Violet",
    booktitle = "Proceedings of the 2025 Conference on Empirical Methods in Natural Language Processing",
    month = nov,
    year = "2025",
    address = "Suzhou, China",
    publisher = "Association for Computational Linguistics",
    url = "https://aclanthology.org/2025.emnlp-main.1681/",
    doi = "10.18653/v1/2025.emnlp-main.1681",
    pages = "33078--33110",
    ISBN = "979-8-89176-332-6",
}

@inproceedings{rossiello2021generative,
  title={Generative relation linking for question answering over knowledge bases},
  author={Rossiello, Gaetano and Mihindukulasooriya, Nandana and Abdelaziz, Ibrahim and Bornea, Mihaela and Gliozzo, Alfio and Naseem, Tahira and Kapanipathi, Pavan},
  booktitle={International Semantic Web Conference},
  pages={321--337},
  year={2021},
  organization={Springer}
}

@inproceedings{rossiello2023knowgl,
  title={Knowgl: Knowledge generation and linking from text},
  author={Rossiello, Gaetano and Chowdhury, Md Faisal Mahbub and Mihindukulasooriya, Nandana and Cornec, Owen and Gliozzo, Alfio Massimiliano},
  booktitle={Proceedings of the AAAI Conference on Artificial Intelligence},
  volume={37},
  pages={16476--16478},
  year={2023}
}

@book{DBLP:books/aw/RN2020,
  author       = {Stuart Russell and
                  Peter Norvig},
  title        = {Artificial Intelligence: {A} Modern Approach (4th Edition)},
  publisher    = {Pearson},
  year         = {2020},
  url          = {http://aima.cs.berkeley.edu/},
  isbn         = {9780134610993},
  timestamp    = {Wed, 20 Apr 2022 13:29:51 +0200},
  biburl       = {https://dblp.org/rec/books/aw/RN2020.bib},
  bibsource    = {dblp computer science bibliography, https://dblp.org}
}

@article{sarirete2022artificial,
  title={Artificial intelligence and machine learning research: towards digital transformation at a global scale},
  author={Sarirete, Akila and Balfagih, Zain and Brahimi, Tayeb and Lytras, Miltiadis D and Visvizi, Anna},
  journal={Journal of Ambient Intelligence and Humanized Computing},
  volume={13},
  number={7},
  pages={3319--3321},
  year={2022},
  publisher={Springer}
}

@article{shinn2023reflexion,
  title={Reflexion: Language agents with verbal reinforcement learning},
  author={Shinn, Noah and Cassano, Federico and Gopinath, Ashwin and Narasimhan, Karthik and Yao, Shunyu},
  journal={Advances in Neural Information Processing Systems},
  volume={36},
  pages={8634--8652},
  year={2023}
}

@article{spiess2025autopdl,
  title={Autopdl: Automatic prompt optimization for llm agents},
  author={Spiess, Claudio and Vaziri, Mandana and Mandel, Louis and Hirzel, Martin},
  journal={arXiv preprint arXiv:2504.04365},
  year={2025}
}

@article{touvron2023llama,
  title={Llama 2: Open foundation and fine-tuned chat models},
  author={Touvron, Hugo and Martin, Louis and Stone, Kevin and Albert, Peter and Almahairi, Amjad and Babaei, Yasmine and Bashlykov, Nikolay and Batra, Soumya and Bhargava, Prajjwal and Bhosale, Shruti and others},
  journal={arXiv preprint arXiv:2307.09288},
  year={2023}
}

@article{yang2025qwen3,
  title={Qwen3 technical report},
  author={Yang, An and Li, Anfeng and Yang, Baosong and Zhang, Beichen and Hui, Binyuan and Zheng, Bo and Yu, Bowen and Gao, Chang and Huang, Chengen and Lv, Chenxu and others},
  journal={arXiv preprint arXiv:2505.09388},
  year={2025}
}

@inproceedings{yang2023large,
  title={Large language models as optimizers},
  author={Yang, Chengrun and Wang, Xuezhi and Lu, Yifeng and Liu, Hanxiao and Le, Quoc V and Zhou, Denny and Chen, Xinyun},
  booktitle={The Twelfth International Conference on Learning Representations},
  year={2023}
}

@inproceedings{ye2024prompt,
  title={Prompt engineering a prompt engineer},
  author={Ye, Qinyuan and Ahmed, Mohamed and Pryzant, Reid and Khani, Fereshte},
  booktitle={Findings of the Association for Computational Linguistics: ACL 2024},
  pages={355--385},
  year={2024}
}

@inproceedings{zhang2024safetybench,
  title={Safetybench: Evaluating the safety of large language models},
  author={Zhang, Zhexin and Lei, Leqi and Wu, Lindong and Sun, Rui and Huang, Yongkang and Long, Chong and Liu, Xiao and Lei, Xuanyu and Tang, Jie and Huang, Minlie},
  booktitle={Proceedings of the 62nd Annual Meeting of the Association for Computational Linguistics (Volume 1: Long Papers)},
  pages={15537--15553},
  year={2024}
}

@inproceedings{zheng2024archer,
  title={Archer: A human-labeled text-to-SQL dataset with arithmetic, commonsense and hypothetical reasoning},
  author={Zheng, Danna and Lapata, Mirella and Pan, Jeff},
  booktitle={Proceedings of the 18th Conference of the European Chapter of the Association for Computational Linguistics (Volume 1: Long Papers)},
  pages={94--111},
  year={2024}
}

@article{zheng2025greaterprompt,
  title={GREATERPROMPT: A Unified, Customizable, and High-Performing Open-Source Toolkit for Prompt Optimization},
  author={Zheng, Wenliang and Das, Sarkar Snigdha Sarathi and Zhang, Yusen and Zhang, Rui},
  journal={arXiv preprint arXiv:2504.03975},
  year={2025}
}

@misc{pydantic-ai,
  author       = {{Pydantic}},
  title        = {{GenAI Agent Framework, the Pydantic way}},
  howpublished = {\url{https://ai.pydantic.dev/}},
  note         = {Accessed: 27-Jan-2026},  
  year         = {2025}
}

@misc{pydantic-val,
  author       = {{Pydantic}},
  title        = {{Pydantic Valiadation}},
  howpublished = {\url{https://docs.pydantic.dev/latest/}},
  note         = {Accessed: 27-Jan-2026},  
  year         = {2025}
}

@misc{langgraph,
  author       = {{LangChain}},
  title        = {{LangGraph Documentation}},
  howpublished = {\url{https://docs.langchain.com/oss/python/langgraph/overview}},
  note         = {Accessed: 27-Jan-2026},  
  year         = {2025}
}

@misc{crewai,
  author       = {{CrewAIInc}},
  title        = {{CrewAI Documentation}},
  howpublished = {\url{http://docs.crewai.com/}},
  note         = {Accessed: 27-Jan-2026},  
  year         = {2025}
}
\bibliographystyle{tmlr}

\newpage
\appendix
\section{Logical Transduction Algebra}
\subsection{Formalization}
We presented an abridged version of the formal Logical Transduction Algebra in the main paper. 
Our work is closely related to relational algebra \citep{codd1970relational} and the MapReduce programming model \citep{dean2008mapreduce}. 
This enables the composition of data transformation pipelines and supports an efficient programming model that leverages the stateless and asynchronous nature of LLM inference.

\subsubsection{Algebraic Structures}
\paragraph{Types and Agentic Structure}
We define types and meta-types, collectively referred to as the Agentic Structure ($AG$), and establish a sound algebra over the types and states within it.

\begin{definition}[Types]
\label{def:types}
\normalfont
Let $\Theta$ denote the universe of all possible \emph{types},
$
\Theta = \{X, Y, Z, T, \dots\},
$
where each type $T \in \Theta$ is a collection of named fields $(s_i, T_{s_i})$:
\[
T := \{ (s_1, T_{s_1}), (s_2, T_{s_2}), \ldots, (s_n, T_{s_n}) \},
\]
with each $s_i$ representing a string-valued slot name, and each $T_{s_i} \in \Theta$ denoting the corresponding type of that slot.
\end{definition}

\begin{definition}
\normalfont
Given two types $X$ and $Y$, we define standard set operations component-wise:
\begin{align*}
    X \cup Y &= \{(s_i, T_{s_i}) \mid (s_i, T_{s_i}) \in X \text{ or } (s_i, T_{s_i}) \in Y\},\\
    X \cap Y &= \{(s_i, T_{s_i}) \mid (s_i, T_{s_i}) \in X \text{ and } (s_i, T_{s_i}) \in Y\},\\
    X \setminus Y &= \{(s_i, T_{s_i}) \mid (s_i, T_{s_i}) \in X \text{ and } (s_i, T_{s_i}) \notin Y\},\\
    X \times Y &= \{ ((s_i, T_{s_i}), (s_j, T_{s_j})) \mid (s_i, T_{s_i}) \in X, (s_j, T_{s_j}) \in Y \}
\end{align*}
\end{definition}

\begin{definition}[Agentic Structure $AG$]
\label{def:agentic-structure}
\normalfont
An \emph{Agentic structure} $AG$ is a meta-type that bundles a type schema $s_{\text{atype}}$
\footnote{
Given two types $X$ and $Y$, 
the standard set operations such as union, intersection, complement, and product
can be defined component-wise.
}
and a corresponding list of instances, referred to as \textbf{states} $s_{\text{states}}$:
\[
AG := \left\{
    \begin{aligned}
        & s_{\text{atype}} : \Theta, \\
        & s_{\text{states}} : \text{List}[\Theta]
    \end{aligned}
\right\}
\]
\end{definition}

\noindent
\textbf{Notation conventions:} Types are denoted by uppercase letters. Instances of types are denoted by lowercase letters, with $t : T$ indicating that $t$ is an instance of type $T$. Lists are written in boldface, so $\mathbf{t} : T$ represents a list of instances of type $T$.
We use the shorthand $AG[X]$ to denote an Agentic structure with $s_{\text{atype}} = X$. A boldface lowercase symbol, such as $\mathbf{x} = AG[X]$, represents an instance of $AG[X]$. We also overload the notation to access the list of states: $x_i = \mathbf{x}[i] = \mathbf{x}.s_{\text{states}}[i]$ refers to the $i$-th state of the Agentic instance $\mathbf{x}$.

In Logical Transduction Algebra, we focus on structured data and its transformation around agents encapsulating LLMs. 
The algebraic structure of composing two Agentic instances 
of the same type can be shown as follows.
\begin{proposition}[Monoid of Agentic Instances]
\label{prop:monoid-agentic-instances}
\normalfont
Let $AG[X]$ be an Agentic structure and let $\xi$ be 
the set of all instances of $AG[X]$. 
Define a binary operation $\circ$ on $\xi$ such that for any $\mathbf{x}_1, \mathbf{x}_2 \in \xi$, their composition $\mathbf{x} = \mathbf{x}_1 \circ \mathbf{x}_2$ is an Agentic instance whose state list is the concatenation:
$
\mathbf{x}.s_{\text{states}} := \mathbf{x}_1.s_{\text{states}} \circ \mathbf{x}_2.s_{\text{states}}.
$
Then, the pair $(\xi, \circ)$ forms a \emph{monoid}, where the identity element is the Agentic instance with an empty state list:
$
\mathbf{e}.s_{\text{states}} := [].
$
\end{proposition}
\begin{proof}
We verify the three monoid properties:

\textit{Closure:} Let $\mathbf{x}_1, \mathbf{x}_2 \in \xi$. Then $\mathbf{x} = \mathbf{x}_1 \circ \mathbf{x}_2$ has a state list formed by concatenating two valid state lists, which is itself valid. Hence, $\mathbf{x} \in \xi$.

\textit{Associativity:} For any $\mathbf{x}_1, \mathbf{x}_2, \mathbf{x}_3 \in \xi$,
\begin{align*}
((\mathbf{x}_1 \circ \mathbf{x}_2) \circ \mathbf{x}_3).s_{\text{states}} 
&= (\mathbf{x}_1.s_{\text{states}} \circ \mathbf{x}_2.s_{\text{states}}) \circ \mathbf{x}_3.s_{\text{states}} \\
&= \mathbf{x}_1.s_{\text{states}} \circ (\mathbf{x}_2.s_{\text{states}} \circ \mathbf{x}_3.s_{\text{states}}) \\
&= (\mathbf{x}_1 \circ (\mathbf{x}_2 \circ \mathbf{x}_3)).s_{\text{states}}.
\end{align*}

\textit{Identity:} Let $\mathbf{e} \in \xi$ be the Agentic instance with an empty state list. Then for any $\mathbf{x} \in \xi$,
\begin{align*}
(\mathbf{e} \circ \mathbf{x}).s_{\text{states}} &= [] \circ \mathbf{x}.s_{\text{states}} = \mathbf{x}.s_{\text{states}}, \\
(\mathbf{x} \circ \mathbf{e}).s_{\text{states}} &= \mathbf{x}.s_{\text{states}} \circ [] = \mathbf{x}.s_{\text{states}}.
\end{align*}

Thus, $(\xi, \circ)$ satisfies closure, associativity, and identity, and is therefore a monoid.
\end{proof}

The standard operators follow standard algebraic principles
such as  the product, $\mathbf{x} \times \mathbf{y}$ 
for $\mathbf{x}\!\in\!\!AG[X]$ and $\mathbf{y}\!\in \!\!AG[Y]$, 
the equivalence, $\mathbf{x_1} \sim \mathbf{x_2}$ for $\mathbf{x_1}, \mathbf{x_2} \in AG[X]$, and the quotient,
$\mathbf{z}/\mathbf{y}$ for $\mathbf{z}\in AG[X\!\times\!Y]$ and $\mathbf{y} \in AG[Y]$.

\paragraph{Product of the Agentic Structure}
\label{def:product-agentic-structures}
Next, we define the product of Agentic structures, a construction that plays a foundational role in modeling and executing complex, multi-dimensional data workflows. 
By combining two Agentic structures into a single product structure, we can represent composite types—such as paired entities, coupled processes, or input-output relationships—within a unified algebraic framework. 
This formulation ensures that operations applied to states remain well-defined, type-safe, and composable, preserving the monoidal properties of each component. 
The product structure is especially valuable in scenarios involving joint reasoning, parallel transformations, or structured transductions across heterogeneous data streams.

\begin{definition}[Product of Agentic Structures]
\normalfont
Let $AG[X]$ and $AG[Y]$ be two Agentic structures defined over distinct types $X$ and $Y$, respectively. We define their product as a new Agentic structure $AG[T]$, where the type $T$ is the Cartesian product of the two types:
\[
T : X \times Y.
\]
Given instances $\mathbf{x} : AG[X]$ and $\mathbf{y} : AG[Y]$, we define an instance $\mathbf{t} : AG[T]$ such that:
\[
\mathbf{t}.s_{\text{states}} = (\mathbf{x}.s_{\text{states}}, \mathbf{y}.s_{\text{states}}),
\]
i.e., the state list of $\mathbf{t}$ is the pair of state lists from $\mathbf{x}$ and $\mathbf{y}$.
\end{definition}

\begin{proposition}[Monoid of Agentic Product]
\label{prop:monoid-agentic-product}
\normalfont
Let $\xi_X$ and $\xi_Y$ be the set of all instances of $AG[X]$ and $AG[Y]$, respectively, and let $\xi_T$ be the set of all instances of $AG[T]$.

Define a binary operation $\circ$ on $\xi_T$ as follows:
\[
(\mathbf{x}_1, \mathbf{y}_1) \circ (\mathbf{x}_2, \mathbf{y}_2) := (\mathbf{x}_1 \circ \mathbf{x}_2, \mathbf{y}_1 \circ \mathbf{y}_2),
\]
where $\circ$ on each component denotes concatenation of state lists:
\[
(\mathbf{x}_1 \circ \mathbf{x}_2).s_{\text{states}} := \mathbf{x}_1.s_{\text{states}} \circ \mathbf{x}_2.s_{\text{states}},
\]
and similarly for $\mathbf{y}_1 \circ \mathbf{y}_2$.

Then, the structure $(\xi_T, \circ)$ forms a \emph{monoid}, with the identity element given by the pair of Agentic instances with empty state lists:
\[
\mathbf{e}_T := (\mathbf{e}_X, \mathbf{e}_Y), \quad \text{where } \mathbf{e}_X.s_{\text{states}} = [], \quad \mathbf{e}_Y.s_{\text{states}} = [].
\]
\end{proposition}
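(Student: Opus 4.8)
The plan is to reduce the product case to the single-type monoid result, \Cref{prop:monoid-agentic-instances}, by working componentwise. The essential observation is that the binary operation $\circ$ on $\xi_T$ is defined as the pairing of the two component operations, and the identity $\mathbf{e}_T$ is the pair of component identities, so each monoid axiom for $(\xi_T, \circ)$ will follow by applying the corresponding axiom to the first and second coordinates independently. In other words, this is an instance of the general fact that a product of monoids is a monoid, specialized to the concrete construction at hand.

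First I would verify \emph{closure}: given $(\mathbf{x}_1,\mathbf{y}_1), (\mathbf{x}_2,\mathbf{y}_2) \in \xi_T$, by \Cref{prop:monoid-agentic-instances} applied to $\xi_X$ we have $\mathbf{x}_1 \circ \mathbf{x}_2 \in \xi_X$, and applied to $\xi_Y$ we have $\mathbf{y}_1 \circ \mathbf{y}_2 \in \xi_Y$; hence the pair $(\mathbf{x}_1 \circ \mathbf{x}_2, \mathbf{y}_1 \circ \mathbf{y}_2)$ is a valid instance of $AG[T]$, i.e.\ lies in $\xi_T$. Next I would verify \emph{associativity}: unfolding the definition of $\circ$ on $\xi_T$ twice turns $((\mathbf{x}_1,\mathbf{y}_1)\circ(\mathbf{x}_2,\mathbf{y}_2))\circ(\mathbf{x}_3,\mathbf{y}_3)$ into $((\mathbf{x}_1\circ\mathbf{x}_2)\circ\mathbf{x}_3,\ (\mathbf{y}_1\circ\mathbf{y}_2)\circ\mathbf{y}_3)$, and the associativity already established for each of $\xi_X$ and $\xi_Y$ (via list concatenation) rewrites each coordinate to the right-associated form, giving equality with $(\mathbf{x}_1,\mathbf{y}_1)\circ((\mathbf{x}_2,\mathbf{y}_2)\circ(\mathbf{x}_3,\mathbf{y}_3))$. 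Finally I would verify the \emph{identity} law: for any $(\mathbf{x},\mathbf{y}) \in \xi_T$,
\[
\mathbf{e}_T \circ (\mathbf{x},\mathbf{y}) = (\mathbf{e}_X \circ \mathbf{x},\ \mathbf{e}_Y \circ \mathbf{y}) = (\mathbf{x},\mathbf{y}),
\]
using that $\mathbf{e}_X$ and $\mathbf{e}_Y$ are the identities in $\xi_X$ and $\xi_Y$ respectively, and symmetrically for $(\mathbf{x},\mathbf{y}) \circ \mathbf{e}_T$.

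The one point that needs a little care, rather than a genuine obstacle, is making precise what a "valid" state list for $AG[T]$ is, since the product structure stores its states as a \emph{pair of lists} $(\mathbf{x}.s_{\text{states}}, \mathbf{y}.s_{\text{states}})$ rather than as a single flat list of $X\times Y$ instances; I would state explicitly that the operation $\circ$ on $\xi_T$ acts coordinatewise on this pair, so that no interleaving or alignment of the two list components is required. Once that convention is fixed, every axiom is a direct transcription of \Cref{prop:monoid-agentic-instances} in each coordinate, and the proof is essentially mechanical.
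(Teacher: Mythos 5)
Your proposal is correct and matches the paper's proof: both verify closure, associativity, and the identity law coordinatewise, reducing each axiom to the corresponding property of the component monoids $(\xi_X,\circ)$ and $(\xi_Y,\circ)$ established in the single-type proposition. Your added remark about fixing the convention that $\circ$ acts on the pair of state lists without interleaving is a reasonable clarification but does not change the argument.
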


\begin{proof}
We verify the three monoid properties for $(\xi_T, \circ)$.

\textit{Closure:}  
Let $(\mathbf{x}_1, \mathbf{y}_1), (\mathbf{x}_2, \mathbf{y}_2) \in \xi_T$. Then their composition is:
\[
(\mathbf{x}_1 \circ \mathbf{x}_2, \mathbf{y}_1 \circ \mathbf{y}_2),
\]
where each component is a valid Agentic instance due to closure in $(\xi_X, \circ)$ and $(\xi_Y, \circ)$. Hence, the result is in $\xi_T$.

\textit{Associativity:}  
Let $(\mathbf{x}_1, \mathbf{y}_1), (\mathbf{x}_2, \mathbf{y}_2), (\mathbf{x}_3, \mathbf{y}_3) \in \xi_T$. Then:
\begin{align*}
((\mathbf{x}_1, \mathbf{y}_1) \circ (\mathbf{x}_2, \mathbf{y}_2)) \circ (\mathbf{x}_3, \mathbf{y}_3)
&= (\mathbf{x}_1 \circ \mathbf{x}_2, \mathbf{y}_1 \circ \mathbf{y}_2) \circ (\mathbf{x}_3, \mathbf{y}_3) \\
&= ((\mathbf{x}_1 \circ \mathbf{x}_2) \circ \mathbf{x}_3, (\mathbf{y}_1 \circ \mathbf{y}_2) \circ \mathbf{y}_3) \\
&= (\mathbf{x}_1 \circ (\mathbf{x}_2 \circ \mathbf{x}_3), \mathbf{y}_1 \circ (\mathbf{y}_2 \circ \mathbf{y}_3)) \\
&= (\mathbf{x}_1, \mathbf{y}_1) \circ ((\mathbf{x}_2, \mathbf{y}_2) \circ (\mathbf{x}_3, \mathbf{y}_3)),
\end{align*}
using associativity in each component.

\textit{Identity:}  
Let $\mathbf{e}_T := (\mathbf{e}_X, \mathbf{e}_Y)$, where $\mathbf{e}_X$ and $\mathbf{e}_Y$ are identity elements in $\xi_X$ and $\xi_Y$, respectively. Then for any $(\mathbf{x}, \mathbf{y}) \in \xi_T$:
\begin{align*}
(\mathbf{e}_X, \mathbf{e}_Y) \circ (\mathbf{x}, \mathbf{y}) &= (\mathbf{e}_X \circ \mathbf{x}, \mathbf{e}_Y \circ \mathbf{y}) = (\mathbf{x}, \mathbf{y}), \\
(\mathbf{x}, \mathbf{y}) \circ (\mathbf{e}_X, \mathbf{e}_Y) &= (\mathbf{x} \circ \mathbf{e}_X, \mathbf{y} \circ \mathbf{e}_Y) = (\mathbf{x}, \mathbf{y}).
\end{align*}
Hence, $\mathbf{e}_T$ is the identity element.
\end{proof}

\paragraph{Quotient of the Agentic Structure}
To complement the expressiveness of product structures, the quotient of Agentic structures provides a principled mechanism for abstraction and generalization. 
By defining an equivalence relation over Agentic instances—such as grouping together states that differ only in irrelevant or redundant dimensions—we can collapse fine-grained distinctions into coarser, semantically meaningful categories. 
This is especially useful in scenarios involving behavioral equivalence, or clustering of similar agentic behaviors. 
The quotient structure enables reasoning at a higher level of abstraction while preserving the algebraic properties of the original system. 
In distributed settings, 
it supports compression, deduplication, and aggregation of stateful computations.

\begin{definition}[Equivalence Relation on Agentic Instances]
\label{def:equiv-relation-agentic-instances}
\normalfont
Let $\xi_X$ be the set of Agentic instances over type $X$.

An equivalence relation $\sim$ on $\xi_X$ is defined by a relation $\mathcal{R}$ on state lists $\mathbf{s} : X$ such that for any $\mathbf{x}, \mathbf{y} \in \xi_X$,
\[
\mathbf{x} \sim \mathbf{y} \iff \mathcal{R}(\mathbf{x}.s_{\text{states}}, \mathbf{y}.s_{\text{states}}),
\]
where $\mathcal{R}$ satisfies the following properties:
\begin{itemize}
    \item \textit{Reflexivity:} $\mathcal{R}(\mathbf{s}, \mathbf{s})$ for all state lists $\mathbf{s}$.
    \item \textit{Symmetry:} If $\mathcal{R}(\mathbf{s}_1, \mathbf{s}_2)$, then $\mathcal{R}(\mathbf{s}_2, \mathbf{s}_1)$.
    \item \textit{Transitivity:} If $\mathcal{R}(\mathbf{s}_1, \mathbf{s}_2)$ and $\mathcal{R}(\mathbf{s}_2, \mathbf{s}_3)$, then $\mathcal{R}(\mathbf{s}_1, \mathbf{s}_3)$.
\end{itemize}
The specific form of $\mathcal{R}$ depends on the semantics of the Agentic structure. A common choice is \emph{statewise equivalence}, defined below.
\end{definition}

\begin{definition}[Statewise Equivalence of Agentic Instances]
\label{def:statewise-equivalence-agentic-instances}
\normalfont
Let $\xi_X$ be the set of Agentic instances over type $X$.

Define an equivalence relation $\sim$ on $\xi_X$ such that for any $\mathbf{x}, \mathbf{y} \in \xi_X$,
\[
\mathbf{x} \sim \mathbf{y} \iff \mathbf{x}.s_{\text{states}} \equiv \mathbf{y}.s_{\text{states}},
\]
where $\equiv$ denotes elementwise equivalence of state lists. That is,
\[
\mathbf{x}.s_{\text{states}} = [x_1, x_2, \dots, x_n], \quad \mathbf{y}.s_{\text{states}} = [y_1, y_2, \dots, y_n],
\]
and for all $i = 1, \dots, n$, we have $x_i \approx y_i$ under a given equivalence relation $\approx$ on $X$.

The relation $\approx$ on $X$ may be defined in various ways, such as:
\begin{itemize}
    \item \textit{Syntactic equality:} $x_i = y_i$.
    \item \textit{Observational equivalence:} $f(x_i) = f(y_i)$ for some observable function $f : X \to O$.
    \item \textit{Abstract equivalence:} $x_i$ and $y_i$ belong to the same equivalence class under a domain-specific partition of $X$.
\end{itemize}
This relation groups Agentic instances whose state trajectories are equivalent up to the equivalence of individual states.
\end{definition}

\begin{definition}[Quotient of Agentic Structure]
\label{def:quotient-agentic-structure}
\normalfont
Let $AG[X]$ be an Agentic structure over type $X$, and let $\sim$ be an equivalence relation on the set of Agentic instances $\xi_X$.

The \emph{quotient Agentic structure}, denoted $AG[X/\sim]$, is defined as follows:
\begin{itemize}
    \item The type $X/\sim$ is the set of equivalence classes of $X$ under the induced relation $\approx$ on individual states.
    \item The set of instances $\xi_{X/\sim}$ consists of equivalence classes $\langle \mathbf{x} \rangle$ of Agentic instances $\mathbf{x} \in \xi_X$, where
    \[
    \langle \mathbf{x} \rangle := \{ \mathbf{y} \in \xi_X \mid \mathbf{y} \sim \mathbf{x} \}.
    \]
    \item The state list of an equivalence class $\langle \mathbf{x} \rangle$ is defined as:
    \[
    \langle \mathbf{x} \rangle.s_{\text{states}} := \{ \mathbf{y}.s_{\text{states}} \mid \mathbf{y} \sim \mathbf{x} \}.
    \]
\end{itemize}
This structure abstracts over individual Agentic instances by identifying those whose states are equivalent.
\end{definition}

\begin{proposition}[Monoid Structure on Quotient Agentic Structure]
\label{prop:monoid-quotient-agentic-structure}
\normalfont
Let $(\xi_X, \circ)$ be a monoid of Agentic instances over type $X$, and let $\sim$ be a congruence relation on $\xi_X$, i.e., for all $\mathbf{x}_1 \sim \mathbf{x}_2$ and $\mathbf{y}_1 \sim \mathbf{y}_2$, we have:
\[
\mathbf{x}_1 \circ \mathbf{y}_1 \sim \mathbf{x}_2 \circ \mathbf{y}_2.
\]

Then, the quotient structure $(\xi_X / \sim, \circ)$ forms a monoid, where:
\begin{itemize}
    \item Elements are equivalence classes $\langle \mathbf{x} \rangle$.
    \item The operation is defined by:
    \[
    \langle \mathbf{x} \rangle \circ \langle \mathbf{y} \rangle := \langle \mathbf{x} \circ \mathbf{y} \rangle.
    \]
    \item The identity is $\langle \mathbf{e} \rangle$, where $\mathbf{e}$ is the identity in $(\xi_X, \circ)$.
\end{itemize}
\end{proposition}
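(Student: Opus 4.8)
The plan is to reduce everything to the already-established monoid structure $(\xi_X, \circ)$ from Proposition~\ref{prop:monoid-agentic-instances}, treating the congruence hypothesis as the single lever that makes the quotient operation legitimate. Concretely, I would proceed in four steps: (i) verify that $\langle \mathbf{x} \rangle \circ \langle \mathbf{y} \rangle := \langle \mathbf{x} \circ \mathbf{y} \rangle$ is well defined; (ii) deduce closure; (iii) lift associativity; and (iv) exhibit $\langle \mathbf{e} \rangle$ as a two-sided identity.

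First I would address well-definedness, which is the only step with real content. Suppose $\langle \mathbf{x}_1 \rangle = \langle \mathbf{x}_2 \rangle$ and $\langle \mathbf{y}_1 \rangle = \langle \mathbf{y}_2 \rangle$, i.e., $\mathbf{x}_1 \sim \mathbf{x}_2$ and $\mathbf{y}_1 \sim \mathbf{y}_2$. By the congruence hypothesis, $\mathbf{x}_1 \circ \mathbf{y}_1 \sim \mathbf{x}_2 \circ \mathbf{y}_2$, hence $\langle \mathbf{x}_1 \circ \mathbf{y}_1 \rangle = \langle \mathbf{x}_2 \circ \mathbf{y}_2 \rangle$, so the product of two classes does not depend on the chosen representatives and $\circ$ is a genuine function on $\xi_X / \sim$. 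Closure is then immediate: the product of two classes is by definition a class $\langle \mathbf{x} \circ \mathbf{y} \rangle$, which lies in $\xi_X / \sim$ because $\mathbf{x} \circ \mathbf{y} \in \xi_X$ by closure of $(\xi_X, \circ)$.

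Next I would transport associativity and the identity law through the canonical surjection $\pi : \mathbf{x} \mapsto \langle \mathbf{x} \rangle$. For associativity, choosing representatives $\mathbf{x}, \mathbf{y}, \mathbf{z}$,
\[
(\langle \mathbf{x} \rangle \circ \langle \mathbf{y} \rangle) \circ \langle \mathbf{z} \rangle = \langle (\mathbf{x} \circ \mathbf{y}) \circ \mathbf{z} \rangle = \langle \mathbf{x} \circ (\mathbf{y} \circ \mathbf{z}) \rangle = \langle \mathbf{x} \rangle \circ (\langle \mathbf{y} \rangle \circ \langle \mathbf{z} \rangle),
\]
the middle equality being associativity in $(\xi_X, \circ)$. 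For the identity, with $\mathbf{e}$ the empty-state instance,
\[
\langle \mathbf{e} \rangle \circ \langle \mathbf{x} \rangle = \langle \mathbf{e} \circ \mathbf{x} \rangle = \langle \mathbf{x} \rangle = \langle \mathbf{x} \circ \mathbf{e} \rangle = \langle \mathbf{x} \rangle \circ \langle \mathbf{e} \rangle,
\]
using the identity law in $(\xi_X, \circ)$. Steps (i)--(iv) together yield the monoid axioms for $(\xi_X / \sim, \circ)$.

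The only genuine obstacle is the well-definedness of $\circ$ on equivalence classes; without the congruence assumption the construction collapses, so the proof hinges entirely on that hypothesis being stated. Everything else is a routine descent of the Proposition~\ref{prop:monoid-agentic-instances} axioms along $\pi$, which I would present tersely. One point deserving a sentence of care is that the statewise equivalences of Definition~\ref{def:statewise-equivalence-agentic-instances} are not automatically congruences for concatenation; I would either keep the proposition explicitly conditional on congruence or add a short remark identifying when length-compatible statewise equivalence does respect $\circ$, so that the claim is not vacuous in the intended applications.
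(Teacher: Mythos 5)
Your proof is correct and follows essentially the same route as the paper's: well-definedness from the congruence hypothesis, then lifting associativity and the identity law from $(\xi_X, \circ)$ to the quotient. Your closing observation that statewise equivalence is not automatically a congruence for concatenation is a worthwhile caveat the paper does not make, but it does not alter the argument.
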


\begin{proof}
We verify the monoid properties on the quotient structure:

\textit{Well-definedness:}  
If $\mathbf{x}_1 \sim \mathbf{x}_2$ and $\mathbf{y}_1 \sim \mathbf{y}_2$, then by congruence:
\[
\mathbf{x}_1 \circ \mathbf{y}_1 \sim \mathbf{x}_2 \circ \mathbf{y}_2,
\]
so $\langle \mathbf{x}_1 \circ \mathbf{y}_1 \rangle = \langle \mathbf{x}_2 \circ \mathbf{y}_2 \rangle$.

\textit{Associativity:}  
Follows from associativity in $\xi_X$:
\[
\langle \mathbf{x} \rangle \circ (\langle \mathbf{y} \rangle \circ \langle \mathbf{z} \rangle) = \langle \mathbf{x} \circ (\mathbf{y} \circ \mathbf{z}) \rangle = \langle (\mathbf{x} \circ \mathbf{y}) \circ \mathbf{z} \rangle = (\langle \mathbf{x} \rangle \circ \langle \mathbf{y} \rangle) \circ \langle \mathbf{z} \rangle.
\]

\textit{Identity:}  
For any $\langle \mathbf{x} \rangle$,
\[
\langle \mathbf{e} \rangle \circ \langle \mathbf{x} \rangle = \langle \mathbf{e} \circ \mathbf{x} \rangle = \langle \mathbf{x} \rangle, \quad \langle \mathbf{x} \rangle \circ \langle \mathbf{e} \rangle = \langle \mathbf{x} \circ \mathbf{e} \rangle = \langle \mathbf{x} \rangle.
\]
\end{proof}

\begin{example}[Quotient of a Product Agentic Structure]
\normalfont
Let $AG[X]$ and $AG[Y]$ be Agentic structures over types $X$ and $Y$, respectively. 
Their product $AG[T]$ is defined by
\[
s_{\text{atype}} = X \times Y, \quad
s_{\text{states}} = (\mathbf{x}.s_{\text{states}}, \mathbf{y}.s_{\text{states}})
\]
for instances $\mathbf{x} \in AG[X]$ and $\mathbf{y} \in AG[Y]$.

We define an equivalence relation $\sim$ on $\xi_T$ such that for any 
$\mathbf{t}_1, \mathbf{t}_2 \in \xi_T$,
\[
\mathbf{t}_1 \sim \mathbf{t}_2 \iff \forall i, \; x_i^{(1)} \approx x_i^{(2)},
\]
where $x_i^{(1)}$ and $x_i^{(2)}$ are the first components of the $i$-th state in $\mathbf{t}_1$ and $\mathbf{t}_2$, respectively, and $\approx$ is an equivalence relation on $X$.

As a concrete example, let the types and the equivalence $\approx$ be defined as:
\begin{itemize}
    \item $X = \{\text{Red}, \text{Green}, \text{Blue}\}$ \hfill (colors),
    \item $Y = \{\text{Circle}, \text{Square}\}$ \hfill (shapes),
    \item Define $\approx$ on $X$ by:
    \[
    \text{Red} \approx \text{Green}, \quad \text{Blue} \not\approx \text{Red}, \quad \text{Blue} \not\approx \text{Green}.
    \]
\end{itemize}

Consider two Agentic instances:
\[
\mathbf{t}_1.s_{\text{states}} = [(\text{Red}, \text{Circle}), (\text{Green}, \text{Square})],
\]
\[
\mathbf{t}_2.s_{\text{states}} = [(\text{Green}, \text{Circle}), (\text{Red}, \text{Square})].
\]

Then $\mathbf{t}_1 \sim \mathbf{t}_2$ because:
\[
\text{Red} \approx \text{Green}, \quad \text{Green} \approx \text{Red}.
\]
Note that the shape components (second elements) are not constrained by the equivalence relation.

The quotient Agentic structure $AG[T/\sim]$ consists of equivalence classes $\langle \mathbf{t} \rangle$ of Agentic instances under $\sim$, where:
\[
\langle \mathbf{t} \rangle := \{ \mathbf{t}' \in \xi_T \mid \mathbf{t}' \sim \mathbf{t} \}.
\]

This structure abstracts over differences in the first component of the state tuples according to $\approx$, while preserving the full state list structure.
\end{example}

\subsubsection{The Transduction Operator}
Equipped with Agentic structures that form a monoid, we obtain a sound abstraction for composing the data workflows in a functional programming style. This foundation enables the introduction of the \emph{logical transduction operator}, which utilizes LLMs as transductive inference engines.
We now define a series of logical transduction operators, organized by increasing levels of complexity. These operators are designed with explicit consideration of types and Agentic structures.

\paragraph{Transduction Operator Overloading}
\begin{definition}[Transduction]
\label{def:transduction}
\normalfont
Given an information object $x$ and a target Agentic structure $AG[Y]$, the \emph{transduction} of $x$ into $AG[Y]$ is defined as:
\[
\mathbf{y} := Y \ll x = \bigcup_i (y_i, T_{s_i}),
\]
where each $y_i \in T_{s_i}$ is a value assigned to slot $s_i$ of type $T_{s_i}$, logically inferred from $x$.
\noindent
Here, the operator $\ll$ denotes a logical transduction process, implemented via an LLM, that maps $x$ to a structured output conforming to the type $Y$.
\end{definition}

To support the definition of logical transduction operators, we introduce a generic function that renders typed objects into textual representations suitable for LLM input.

\begin{definition}[Prompt Function]
\label{def:prompt-function}
\normalfont
Given a type $T \in \Theta$, a \emph{prompt function} $P$ is a mapping that renders a list of states $\mathbf{t} : T$ into an information object, leveraging the string-valued slot names associated with $T$. Formally,
$
P : \text{List}[T] \rightarrow \texttt{str}.
$
\end{definition}
\noindent
Prompt functions serve as a bridge between structured data and natural language, enabling logical transduction operators to interface with LLMs by converting typed instances into semantically meaningful prompts.

We now define two specific forms of logical transduction: zero-shot and few-shot.

\begin{definition}[Zero-Shot Logical Transduction]
\label{def:zero-shot-logical-transduction}
\normalfont
Let $\mathbf{x} = AG[X]$ and $\mathbf{y} = AG[Y]$ be Agentic structures over types $X$ and $Y$, respectively.  
A \emph{zero-shot logical transduction} from $\mathbf{x}$ to $\mathbf{y}$ is defined component-wise as:
\[
\mathbf{y}[i] = Y \ll P(\mathbf{x}[i]),
\]
where $P: X \rightarrow \texttt{str}$ is a prompt function that renders each instance $\mathbf{x}[i]$ into a textual prompt.
\end{definition}

Next, we also show more overloaded transduction operators
in the case of the few-shot transduction,
$\mathbf{y}[i] = Y \ll \left( P(\mathbf{x}[i]) \oplus FS(\mathbf{x}, \mathbf{y}) \right)$ with a few-shot function
$FS(\mathbf{x}, \mathbf{y}) := P\left((\mathbf{x}', \mathbf{y}')\right)$,
and a syntactic sugar such as self-transduction.

\begin{definition}[Few-Shot Logical Transduction]
\label{def:few-shot-logical-transduction}
\normalfont
Let $\mathbf{x} = AG[X]$ and $\mathbf{y} = AG[Y]$ be Agentic structures over types $X$ and $Y$, respectively.  
A \emph{few-shot logical transduction} from $\mathbf{x}$ to $\mathbf{y}$ is defined for all indices $i$ such that $\mathbf{y}[i] = \emptyset$ as:
\[
\mathbf{y}[i] = Y \ll \left( P(\mathbf{x}[i]) \oplus FS(\mathbf{x}, \mathbf{y}) \right),
\]
where:
\begin{itemize}
  \item $P: X \rightarrow \texttt{str}$ is a prompt function that renders an instance $\mathbf{x}[i]$ into a textual prompt.
  \item $\oplus$ denotes prompt concatenation.
  \item $FS(\mathbf{x}, \mathbf{y})$ is the \emph{few-shot context}, defined as:
  \[
  FS(\mathbf{x}, \mathbf{y}) := P\left((\mathbf{x}', \mathbf{y}')\right),
  \]
  where $(\mathbf{x}', \mathbf{y}')$ is the projection of $(\mathbf{x}, \mathbf{y})$ onto the subset of indices for which $\mathbf{y}[i] \ne \emptyset$.
\end{itemize}
\end{definition}

Finally, we introduce self-transduction as syntactic sugar within the programming model for logical transductions.
\begin{definition}[Self Transduction]
\label{def:self-transduction}
\normalfont
Let $\mathbf{x} \in AG[X]$ be an Agentic structure, and let $Y, Z \subset X$ be two disjoint subsets of types.  
A \emph{self transduction} is a function that produces a modified Agentic structure $\mathbf{x'} \in AG[X]$, defined as:
\[
\mathbf{x'} = \mathbf{x} \ll_{Y,Z} := \mathbf{x} \cup \left( \mathbf{x}[Y] \ll \mathbf{x}[Z] \right),
\]
where $\mathbf{x}[Y]$ denotes the \emph{rebind operator}, which extracts an $AG[Y]$ from $\mathbf{x}$ by retaining only the slots in $Y$ that overlap with $X$.
\end{definition}

\paragraph{Properties of Transduction Operator}
Next, we formalize key properties of the transduction operator.
These properties are foundational for enabling scalable, parallel, and composable computation.

\begin{proposition}[Conditional Determinism]
\label{prop:conditional-determinism}
\normalfont
Let $\sigma$ denote a fixed transduction context, which may include components such as a few-shot context, additional instructions, 
external tools, or memory. 
Let the LLM configuration—comprising model weights, temperature, and decoding strategy—also be fixed.
Then, for any input $x_i$, the transduction
$
y_i := Y \ll x_i
$
is deterministic. 
\end{proposition}
\begin{proof}
When the model parameters and transduction context $\sigma$ are fixed, and the LLM is configured with deterministic settings (e.g., temperature set to zero and caching enabled), the output $y_i$ is uniquely determined by the input $x_i$ and the context $\sigma$. Therefore, the transduction process is deterministic under these conditions.
\end{proof}

\begin{proposition}[Statelessness]
\label{prop:statelessness-async-execution}
\normalfont
Logical transduction operators are stateless.
The output of a transduction
$
y_i := Y \ll x_i
$
depends only on $x_i$ and the transduction context, 
and not on any prior or future transductions. 
\end{proposition}
\begin{proof}
By definition, the transduction operator $\ll$ does not rely on conversational memory or sequential state. Each $y_i$ is computed independently from $x_i$ and $\sigma$, enabling asynchronous evaluation.
\end{proof}

\begin{proposition}[Compositionality]
\label{prop:compositionality}
\normalfont
Let $AG[X]$, $AG[Y]$, and $AG[Z]$ be Agentic structures over types $X$, $Y$, and $Z$, respectively. 
Suppose
$
\mathbf{y'}= \mathbf{y} \ll \mathbf{x},
$
and
$
\mathbf{z'}= \mathbf{z} \ll \mathbf{y'}.
$
Then the composite transduction holds,
$
\mathbf{z'}= \mathbf{z} \ll \mathbf{y} \ll \mathbf{x}.
$
\end{proposition}
\begin{proof}
Each transduction step applies $\ll$ component-wise to the state list of the input Agentic structure. Since the output of $\mathbf{y} \ll \mathbf{x}$ is an Agentic structure $AG[Y]$, 
it can be used as input to the next transduction. 
Thus, the composition is well-defined and yields $\mathbf{z'}$.
\end{proof}

\paragraph{Implications for Distributed and Concurrent Computing}
These properties make the transduction operator $\ll$ particularly well-suited for distributed and concurrent computing paradigms.
The \textit{Conditional Determinism} ensures reproducibility and traceability in distributed pipelines, 
the \textit{Statelessness} enables parallel execution, allowing transductions to be mapped across shards of data without coordination or shared state, and
the \textit{Compositionality} supports modular pipeline construction, akin to functional composition in MapReduce, where intermediate Agentic structures can be chained and reused.

\subsubsection{Asynchronous MapReduce}
The programming model of the \texttt{Agentics} supports asynchronous execution of mapping and reduction operations over Agentic structures, enabling scalable and composable data workflows. 
We formalize these operations as \texttt{amap} and \texttt{areduce}, which extend the MapReduce by \citet{dean2008mapreduce}.

\begin{definition}[Asynchronous Map (\texttt{amap})]
\normalfont
Let $AG[X]$ be an Agentic structure over type $X$, and 
let $f : X \to \text{List}[Y]$ 
be an asynchronous mapping function. Then the asynchronous map operator is defined as:
\[
\texttt{amap} : (AG[X], f) \to AG[Y],
\]
where the output Agentic structure $\mathbf{y} = \texttt{amap}(\mathbf{x}, f)$ satisfies:
$
\mathbf{y}.s_{\text{states}} = \bigcup_i f(x_i),
$
and the union preserving the original order of inputs.
\end{definition}
\noindent
The function $f$ may 
return an empty list by removing $x_i$ from the output acting as a filter,
map each $x_i$ to a single output acting as a transformer,
or
map each $x_i$ to multiple outputs acting as fan-out.
Note that \texttt{amap} operator is executed asynchronously across all input states, enabling parallelism and scalability in distributed environments.

The \texttt{amap} operator is executed asynchronously across all input states, enabling parallelism and scalability in distributed environments.
\begin{definition}[Asynchronous Reduce (\texttt{areduce})]
\label{def:async-reduce}
\normalfont
Let $AG[X]$ be an Agentic structure and 
let $f : \text{List}[X] \to Y$ 
be an asynchronous reduction function. 
Then the asynchronous reduce operator is defined as:
\[
\texttt{areduce} : (AG[X], f) \to AG[Y],
\]
where the output Agentic structure $\mathbf{y} = \texttt{areduce}(\mathbf{x}, f)$ satisfies: $\mathbf{y}.s_{\text{states}} = f(\mathbf{x})$.
\end{definition}
\noindent
Unlike \texttt{amap}, which applies $f$ to each state individually, \texttt{areduce} applies $f$ to the entire states $\mathbf{x}$ at once. 
This is useful for summarization or aggregation, such as generating a report or computing statistics over the full dataset.
Since LLMs have limited context windows, applying \texttt{areduce} to a large dataset may be intractable. 
In such cases, scalable strategies such as hierarchical or batched reduction can be employed by applying \texttt{areduce} to random subsets and merging the results. 

\paragraph{Composability with Logical Transduction}

\texttt{amap} and \texttt{areduce} can be composed with the logical transduction operator $\ll$ to build expressive and modular workflows. For example,
\[
\mathbf{y} = \texttt{amap}(\mathbf{x},\, x \mapsto Y \ll x),
\quad
\mathbf{z} = \texttt{areduce}(\mathbf{y},\, f),
\]
where the transduction $Y \ll x$ is embedded within the mapping function.
As we can see, Agentic structure enables distributed, asynchronous, and semantically typed computation over structured data.

\section{Computing Infrastructure}
In the experiments, 
we benchmark open-weight instruct tuned models
ranging from larger or smaller parameter version of GPT-OSS-120B, Llama-3.3, Llama-4, Qwen-3, and Mistral. For the experiment that measures the running time, we host LLMs in local vLLM server with 
four A-100-80GB GPUs for Llama-3-3-70B model, 
and one A-100-80GB GPU for other 8B parameter models.

\begin{itemize}
    \item In the Text-to-SQL experiment, we used GPT-OSS-120B, Llama-3.3-70B, Mistral-Large, and Llama-4-Maverick-17B models. These models were also run in a cloud computing environment.
    \item In the schema matching and data imputation experiments, we used GPT-OSS-120B,
     and Llama-4-Maverick-17B models.
     These models were also run in a cloud computing environment
    \item In the Domain Specific MCQA experiments, 
    we used instruction-tuned models such as Qwen3-8B, Llama-3.3-70B, Mistral-Large, and Llama-3-405B.
We locally hosted the Qwen3-8B model to measure running time, while the other three models were used in a cloud computing environment.
    \item In automatic prompt optimization experiment, we locally hosted Qwen3-8B and Llama-3.3-70B models.
\end{itemize}

The following shows the parameters for hosing Qwen3-8B and Llama-3.3-70B models with vLLM.
\begin{lstlisting}[caption=vLLM parameters and computing resources]
GPUS=4
CPUS=16
MEM=200GB
MODEL="meta-llama/Llama-3.3-70B-Instruct"
LEN=16000
  vllm serve ${MODEL} \
  --max-model-len ${LEN} \
  --tensor-parallel-size ${GPUS} \
  --gpu-memory-utilization 0.9

GPUS=1
CPUS=8
MEM=64GB
LEN=8000
MODEL="Qwen/Qwen3-8B"
vllm serve ${MODEL} \
  --max-model-len ${LEN} \
  --tensor-parallel-size ${GPUS} \
  --gpu-memory-utilization 0.9
\end{lstlisting}

\section{Design Patterns and Use Cases}
The \texttt{Agentics} framework
provides a concrete realization of the logical transduction algebra. 
We elaborate on various design patterns and use cases for domain-specific multi-choice QA, 
text-to-SQL pipelines, clustering, and automatic prompt optimization.
This section demonstrates how the \texttt{Agentics} framework supports a wide range of generative structured data workflows through reusable design patterns. 
Each use case highlights a different aspect of the \texttt{Agentics} programming model, showcasing its flexibility, scalability, and composability.

\subsection{Semantic Parsing Text-to-SQL}
\label{subsec:semantic_parsing_text_to_sql}
Text-to-SQL is an essential task for broadening the accessibility of structured data interaction, allowing users to query databases without needing to understand the underlying decisions made by data engineers. Loosely considered a translation task, questions are posed to a database and first translated into SQL queries before being executed and answers retrieved and answers retrieved. 
In practice, this task involves multiple stages of reasoning, whereby one has to interact with the schema of the structured data, as well as understand and decompose the question into its constituent parts.

\subsubsection{Data Models}
\texttt{Agentics} supports this workflow by chaining multiple transduction steps and integrating them with traditional Python logic. First, let's consider setting up the Pydantic types with the required task information.
\begin{lstlisting}[language=Python, breaklines=true,showstringspaces=false,basicstyle=\fontsize{7}{7}\ttfamily, caption=Data model for the simplified Text2SQL task]
class Text2SQLTask(BaseModel):
    question: str = Field(description="The input natural language question.")
    ddl: str = Field(description="The database schema in DDL format (e.g., CREATE TABLE statements).")
    sql_query: Optional[str] = Field(description="The SQL query to be generated from the question.")
    execution_result: Optional[List[Dict[str, str]]] = Field(description="The resulting table from executing the SQL query.")
\end{lstlisting}

From the above Pydantic types, the simplified task is to map the question and DDL to the sql\_query. However, we can break down this complex operation into declarative data modeling steps to improve task performance, including:
\begin{itemize}
    \item enriching the database so that there are additional fields like description of the schema and business descriptions
    \item decompose the user question into constituent parts
    input that can be also a part of optimization!
    \item optimize the prompt template using the final input fields.
\end{itemize}

\begin{lstlisting}[language=Python, breaklines=true,showstringspaces=false,basicstyle=\fontsize{7}{7}\ttfamily, caption=Data model for the compositional Text2SQL task]
class Text2SQLTask(BaseModel):
    question: str = Field(description="The input natural language question.")
    ddl: str = Field(description="The database schema in DDL format (e.g., CREATE TABLE statements).")
    enrichment: Optional[DB] = Field(description="Additional database enrichments  that applies to all problem instances")
    sql_query: Optional[str] = Field(description="The SQL query to be generated from the question.")
    execution_result: Optional[List[Dict[str, str]]] = Field(description="The resulting table from executing the SQL query.")

class DB(BaseModel):
    description: Optional[str] = Field(description="A Description of the business purpose of the db, what use cases it is good for how what type of information it contain")
    keywords: Optional[list[str]] = Field(description="A list of keywords describing the content of the database. Produce Keywords that are: Domain-Relevant: Reflects the thematic area (e.g., education, healthcare, finance). Purpose-Oriented: Indicates the type of insights the database supports (e.g., performance tracking, demographic analysis). Unambiguous: Avoids generic or overly broad terms. Interoperable: Aligns with standard taxonomies when possible (e.g., DataCite or UNSDG classification). Examples of Strong Keywords: student_outcomes, climate_metrics, financial_forecasting, public_health_indicators, supply_chain_kpis")
    few_shots: Optional[list[QuestionSQLPair]] = Field(description="A selection of the generated question-sql pair to be used as examples of how to generate a sql from a question.")
    subquestions: Optional[list[str]] = Field(description="a list of subquestions inside question")
    schema_link: Optional[str] = Field(description="the output of the schema linker in the form of DDL script showing relevant table, column, and values for the question")

class QuestionSQLPair(BaseModel):
    question: Optional[str]
    sql_query: Optional[str]
\end{lstlisting}

\subsubsection{Meta-Prompts and Prompt Templates}

We define the initial prompt that performs the main text-to-SQL task.
The following shows the prompt template used as input to the experiment, which may be optionally modified by the automated prompt optimization flag.
\begin{lstlisting}[language=Python, breaklines=true,showstringspaces=false,basicstyle=\fontsize{7}{7}\ttfamily, caption=The Prompt Template and Meta-Prompt for Automatic Prompt Optimization]
prompt_template = "
Translate the input natural language **question** to a valid SQLite query that can be executed on the following  database in **dbs**.
Do your best to apply the following rules when generating SQL.
- Cleary understand the **question** and given database description in **dbs**.
- Database description is given in the **dbs** with the following fields.
- The database description under **dbs** contains DDL scripts or natural language description of the database, tables, columns, and values.
- Only SELECT statements are allowed, do not produce any DDL or DML.
- When writing `SELECT <column>`, only include the columns specifically mentioned in the question.
- Use **evidence** to find correct column names or the values of the columns or other expressions.
- If you see 'None' or `None` in the [Value examples] for a column, prioritize using `JOIN <table>` or `WHERE <column> IS NOT NULL` to handle potential missing data effectively.    
- Use `WHERE <column> IS NOT NULL` in `WHERE` if you are sorting with `<column>`.
- Use alias in `SELECT` is consistently in the expressions.
- Use `WHERE <alias> IS NOT NULL` in `WHERE` if you are sorting with `<alias>`.

Input is provided under SOURCE.
"{input_spec_str}"

Generate Output as JSON decodable format
"{{"
generated_sql_query: a valid SQLite query that translates nautral language question
"}}"\n
"
\end{lstlisting}

\subsubsection{Main Algorithm}
Next, we show how to define the compositional text-to-SQL pipeline in \texttt{Agentics}.
We see that the entire pipeline can be implemented through compositions of logical transductions defined over the data models.

\begin{lstlisting}[language=Python, breaklines=true,showstringspaces=false,basicstyle=\fontsize{7}{7}\ttfamily, caption=Pseudo-code for the Compositional Text-to-SQL Workflow]
text2sql("enrichment.keywords") << text2sql("enrichment.description") << text2sql("ddl_schema")
text2sql("few_shots.question", "few_shots.sql_query") << text2sql("ddl_schema", "enrichment.description")  # synthetic pair
text2sql("few_shots") = text2sql("few_shots") + k_shot  # additional augmentations
text2sql("few_shots") = text2sql.filter(valid_sql, "few_shots")  # executes sql
text2sql("enrichment.subquestions") << text2sql("question")
text2sql("enrichment.linked_schema") << text2sql("enrichment.ddl_schema", "enrichment.subquestions", "question")
input_fields = ["question", "enrichment"]
text2sql.instructions = optimize(prompt_template, input_fields)
text2sql("sql_query") << text2sql(*input_fields)
text2sql("execution_result") = text2sql.amap(execute_sql_query)
\end{lstlisting}

\subsection{Domain-Specific Multiple Choice Question Answering}
\label{subsec:domain_specific_mcqa}
Domain-specific Multiple Choice Question Answering (MCQA) tasks present unique challenges for LLMs, particularly when grounded in technical domains that are unfamiliar or underrepresented in pretraining corpora. 

\subsubsection{FailureSensorIQ Benchmark}
In this subsection, we demonstrate how the \texttt{Agentics} framework supports structured reasoning and robust performance on the \textit{FailureSensorIQ} benchmark—a dataset designed to evaluate LLMs' understanding of failure modes and sensor relationships in Industry 4.0 \citep{constantinides2025failuresensoriq}.

Unlike widely used QA datasets such as MMLU \citep{hendrycks2021measuring}, FailureSensorIQ introduces a novel domain with no prior exposure to the models under evaluation. This makes it a strong testbed for assessing generalization and reasoning capabilities in high-stakes, real-world industrial contexts. The benchmark includes 8,296 questions across 10 assets, with both single- and multi-answer formats. Despite the presence of strong reasoning models, the best-performing \texttt{openai-o1} achieves only 60.4 precent accuracy on single-answer questions, underscoring the dataset’s difficulty.

\subsubsection{Schema-Guided LLM Reasoning}
The \texttt{Agentics} approach to MCQA leverages self-transduction and schema-driven prompting using \texttt{Pydantic} models. This structured prompt format contrasts with the loosely formatted natural language prompts used in baseline evaluations. By explicitly encoding the input-output schema (e.g., JSON fields for question, options, and selected answers), \texttt{Agentics} reduces decoding errors and enforces type safety. This is particularly beneficial in multi-answer settings, where ambiguity in output formatting can lead to evaluation mismatches and degraded performance.

We observe that this structured prompting pattern not only improves accuracy but also enhances robustness to perturbations. For example, when question phrasing or distractor options are altered, schema-constrained decoding helps maintain consistent model behavior. This suggests that \texttt{Agentics}' structured approach offers a degree of perturbation resilience, addressing one of the key weaknesses identified in the original FailureSensorIQ benchmark.

In addition to accuracy improvements, the \texttt{Agentics} framework enables parallel batch execution of MCQA tasks, significantly reducing inference time. This is achieved through the amap operation, which distributes structured prompts across multiple model invocations. Compared to sequential prompting, this design pattern yields substantial speedups, making it practical for large-scale evaluation and deployment.

\subsubsection{Data Model}
The data model for domain-specific MCQA in \texttt{Agentics} is defined using \texttt{Pydantic}, which enforces structural constraints and type safety during both prompt construction and response decoding. This schema-guided approach ensures that the model's outputs conform to expected formats, reducing parsing errors and improving evaluation reliability.

The \texttt{FailureSensorIQ} class encapsulates the core components of each QA instance, including the question text, list of options, associated metadata (e.g., asset name, question type), and the model-generated answer. The nested \texttt{Answer} class captures the model's selected answer, a numerical confidence score, and a free-text explanation that provides insight into the model’s reasoning process.

\begin{lstlisting}[language=Python,breaklines=true,showstringspaces=false,basicstyle=\fontsize{7}{7}\ttfamily,caption=Data Model for Domain-Specific Multiple Choice QA]
class Answer(BaseModel):
    answer_letter: str = Field(description="The selected answer letter")
    confidence: float = Field(description="Confidence score")
    assessment: str = Field(description="Rationale for the answer")

class FailureSensorIQ(BaseModel):
    id: int = Field(description="Unique identifier for the question")
    question: str = Field(description="The question text")
    options: List[str] = Field(description="List of answer options")
    option_ids: List[str] = Field(description="List of option identifiers")
    asset_name: str = Field(description="Name of the industrial asset")
    relevancy: str = Field(description="Relevancy context or metadata")
    question_type: str = Field(description="Type of question")
    subject: str = Field(description="Subject or topic of the question")
    system_answer: Answer = Field(description="Model-generated answer object")
\end{lstlisting}

\subsubsection{Main Algorithm}
The workflow begins by instantiating an AG object with the \texttt{FailureSensorIQ} schema and a specified batch size. 
Each example from the dataset is parsed into a structured \texttt{FailureSensorIQ} instance and appended to the agent’s internal state.

The core operation is the self transduction, which performs schema-guided inference over the specified input fields—such as question, options, and asset name—and generates structured outputs in the system answer field. 
The transduction is guided by a natural language instruction that defines the task: selecting the most plausible answer from a set of options, along with a confidence score and rationale.
\begin{lstlisting}[language=Python,breaklines=true,showstringspaces=false,basicstyle=\fontsize{7}{7}\ttfamily, caption=Pseudo Code for Domain-Specific Multiple Choice QA]
# Initialize the \texttt{Agentics} benchmark with the FailureSensorIQ schema and batch size
fsiq_benchmark = AG(atype=FailureSensorIQ, batch_size=40)

# Load dataset and populate agent states
dataset = load_dataset("cc4718/FailureSensorIQ")
for example in dataset:
    fsiq_benchmark.states.append(FailureSensorIQ(**example))

# Run self-transduction with structured input and output fields
fsiq_benchmark = await fsiq_benchmark.self_transduction(
    input_fields=[
        "question", "options", "option_ids", 
        "asset_name", "relevancy", "question_type", "subject"
    ],
    output_fields=["system_answer"],
    instructions=(
        "Read the input questions, all possible answers, and background task information. "
        "This is a multiple choice test, where one of the options is true and the others are false. "
        "Select the answer with the highest likelihood of being correct, and return it along with "
        "a confidence score and a verbal assessment explaining your judgment."
    )
)
\end{lstlisting}

\subsection{Prompt Optimization}
\label{subsec:prompt_optimization}
Prompt optimization is a critical component in leveraging large language models (LLMs) for complex tasks. The performance of LLMs is highly sensitive to variations in prompt structure, tone, and even the positioning of textual components. Minor changes—such as rephrasing imperative sentences or reordering blocks—can significantly impact the model's output.

Prompts typically follow a structured format, often divided into system and user sections. The system prompt provides general context, instructions, and expectations for the output, while the user prompt contains task-specific information. Common elements include task descriptions, constraints, few-shot examples, input/output format specifications, and guiding phrases like ``Let's think step by step''. These components are frequently organized using structured formats such as Markdown or JSON schemas.

\subsubsection{General Framework for Prompt Optimization}
\citet{ramnath-etal-2025-systematic} and \citet{li2025survey} have summarized existing prompt optimization techniques into a generic prompt optimization framework. 
Algorithm~\ref{alg:prompt-optimization-framework} presented in \citet{ramnath-etal-2025-systematic}, formalizes the process of prompt optimization as follows. 
Given a task model $M_{\text{task}}$, an initial prompt $\rho \in V$, 
the goal of an prompt optimization system $M_{PO}$ is to obtain the best performing prompt-template $\rho^{opt}$ under a metric $f \in F$ and eval-set $D_{val}$ that maximizes expected performance:
\[
\rho^{\text{opt}} := \arg\max_{\rho \in V} \mathbb{E}_{x \sim D_{\text{val}}} \left[f\left(M_{\text{task}}(\rho \oplus x)\right) \right].
\]

Since the objective function is not tractable due to the combinatorial nature of discrete token-sequence search spaces, the optimization process typically follows a generate-select-evaluate cycle, akin to local search algorithms \citep{DBLP:books/aw/RN2020}. Most methods begin with a predefined prompt template that specifies the structure and content to be included. An initial prompt may be constructed manually or generated automatically using an LLM.

Given a specific task, the dataset is usually partitioned into training and validation sets. The training set is used to optimize the prompt, while the validation set is employed for tuning hyperparameters. Additionally, a held-out test set is used to evaluate final performance. Candidate prompts are generated, filtered based on performance metrics, and refined over successive iterations, with incremental improvements guided by feedback from model outputs.

\begin{algorithm}
\caption{Prompt Optimization Framework} 
\label{alg:prompt-optimization-framework}
\begin{algorithmic}[1]
\State $P_0 := \{\rho_1, \rho_2, \ldots, \rho_k\}$ \Comment{\text{\color{gray}Initial seed prompts}}
\State $D_{val} := \{(x_1, y_1)\}_{i=1}^n$ \Comment{\text{\color{gray}Validation set}}
\State $f_1,\ldots,f_m \in F$ \Comment{\text{\color{gray}Inference evaluation}}
\For {$t=1,2,\ldots,N$} \Comment{\text{\color{gray}Iteration depth}}
    \State $G_t :=M_{PO}(P, D_{val}, F)$ \Comment{\text{\color{gray}Generate prompt candidates with $M_{PO}$}}
    \State $P_t := Select(G_t,  D_{val}, F)$           \Comment{\text{\color{gray}Filter and retain candidates}}
    \If{$f_{convergence} \leq \epsilon$} \Comment{\text{\color{gray}{Optionally check for early convergence}}}
        \State\textbf{exit} 
    \EndIf
\EndFor
\State \Return $\arg\max_{\substack{\rho \in P_N}} E_{\substack{x \sim D_{val}}}\left[f(M_{task}(\rho \oplus x))\right]$
\end{algorithmic} 
\end{algorithm}

\subsubsection{Parallelizing Prompt Optimization with Transduction Algebra}

Within the \texttt{Agentics} framework, the prompt function defined in Definition~\ref{def:prompt-function} maps type information into structured information objects for transduction. Prompt optimization in this context is naturally expressed using transduction algebra.
Since candidate prompts can be generated by logical transduction, we adopt meta-prompt-based optimization strategies similar to those proposed in \citep{yang2023large,ye2024prompt,opsahl2024optimizing}. Our approach emphasizes two key aspects:
\begin{itemize}
    \item The \texttt{Agentics} framework supports a declarative style of prompting, where prompts are constructed to encode rich contextual and type-level information rather than procedural instructions. 
    \item The optimization process follows the generate-select-evaluate cycle described in Algorithm~\ref{alg:prompt-optimization-framework}. 
Importantly, the transduction algebra enables this optimization loop to be expressed in a functional and parallelizable manner. Prompt candidates can be generated and evaluated independently, allowing for efficient execution. This abstraction not only improves scalability but also decouples the optimization logic from the underlying execution strategy.

\end{itemize}

In summary, the \texttt{Agentics} framework provides a principled and extensible foundation for prompt optimization. It integrates declarative prompt construction, transduction algebra, and parallel search strategies into a unified system that supports both expressiveness and scalability. 
Next, we present a functional design pattern for implementing prompt optimization using transduction algebra, abstracting away procedural details common in existing approaches. In Section \ref{subsec:exp_prompt_optimization}, our experiments demonstrate that declarative context optimization improves performance and that parallelization yields substantial runtime gains.

\subsubsection{Data Models}

We begin by defining two data models using Pydantic: \texttt{OptimizationTask} and \texttt{GSM8K}. 
The \texttt{OptimizationTask} schema captures the components of a prompt template—such as role, goal, expected output, and imperative instructions—along with a score field for evaluation. 
The \texttt{GSM8K} schema represents the target task, including the question, ground-truth answer, model-generated reasoning, and correctness flag.

\begin{lstlisting}[language=Python, breaklines=true,showstringspaces=false,basicstyle=\fontsize{7}{7}\ttfamily, caption=Data Models for GSM8K Prompt Optimization]
class OptimizationTask(BaseModel):
    # a list of demo tasks used to guide prompt generation
    demos: list[Any] = Field(description="optimization demo tasks")
    # role description to be embedded in the prompt (e.g., 'You are a math tutor')
    role: str = Field(description="New role instruction")
    # goal statement describing what the prompt aims to achieve
    goal: str = Field(description="New goal instruction")
    # criteria for what constitutes a good or acceptable output
    expected_output: str = Field(description="New expected_output instruction")
    # imperative phrase (e.g., 'Let's think step by step')
    imperative: str = Field(description="New imperative")
    # evaluation score assigned to the prompt after testing on validation data
    score: int = Field(description="evaluation score")
    
class GSM8K(BaseModel):
    question: str = Field(description="a grade school math question.")
    answer: str = Field(description="the ground-truth answer")
    response_think: str = Field(description="the step by step reasoning")
    response_answer: str = Field(description="the final answer")
    # boolean flag indicating whether the response answer is correct
    correct: bool
\end{lstlisting}

\subsubsection{Meta Prompt and Optimized Prompts}
The meta-prompt (\texttt{OPT\_META\_INSTRUCTION}) guides the generation of new prompt templates by describing the structure and expectations for the optimizer. It includes historical context from previous iterations and instructs the model to avoid redundancy. The user prompt template (\texttt{USER\_PROMPT\_TEMPLATE}) is instantiated with the optimized parameters and used to evaluate candidate prompts on the validation set.

The following shows the meta-prompt used for optimizing the prompt template for the GSM8K dataset.
\begin{lstlisting}[language=Python, breaklines=true,showstringspaces=false,basicstyle=\fontsize{7}{7}\ttfamily, caption=Meta Prompt and Template for GSM8K Prompt Optimization]
OPT_META_INSTRUCTION = "Your proposed prompt template will be used in the following way.
* You are "role" -- this role must be suitable for solving the demo task.
* Your personal goal is: "goal" -- the goal achieves the outputs given inputs.
* This is the expected criteria for your final answer "expected_output" -- this constrains the output format.
* You can add a short imperative instruction "imperative" -- this comes after the input of the task.

[[Several demo tasks of input and outputs will be provided when you solve problem.]]

[[The previous optimized prompt templates with scores appear from the worst to the best.]]
{optimization_history}

* Given the previous optimization results, don't generate duplicate or similar prompt templates.
* Generate prompt template that achieves the best score, and succint and concise instructions.
"


USER_PROMPT_TEMPLATE = "
You are {role}.
Your personal goal is: {goal}.
This is the expected criteria for your final answer: {expected_output}.

solve the following task.
{question}

{imperative}
"
\end{lstlisting}

In the following, we show the prompts returned by APO, including both system and user prompts. 
The system prompt consists of three components: role, goal, and expected\_output. In the user prompt, an imperative statement appears after each question. 
A score of 89 was evaluated on the validation set, using 100 problems sampled from the training set, which is higher than the test score of 85.
\begin{lstlisting}[breaklines=true,showstringspaces=false,basicstyle=\fontsize{7}{7}\ttfamily, caption=Optimized Prompt for GSM8K using Llama-3.3-70B Model]
{
"role": "Elite Mathematical Problem Solver", 
"goal": "To swiftly and accurately determine the precise numerical solution by meticulously dissecting complex problem statements, identifying pivotal information, and applying a comprehensive array of advanced mathematical concepts, formulas, and logical reasoning to achieve an optimal solution, ensuring efficiency, precision, and reliability in all calculations, while providing accurate and relevant answers", 
"expected_output": "A single, exact numeric value that directly addresses and solves the given mathematical problem, ensuring all calculations are correctly executed, based on the information provided, and presented in a clear, concise manner, with strict adherence to mathematical rules and consideration of all given data", 
"imperative": "Thoroughly analyze the problem statement, extract key information, apply pertinent mathematical principles and logical reasoning to derive the accurate numerical answer, and provide the final answer in the required numeric format, while validating the solution through re-evaluation of calculations and verification of the accuracy of the obtained result, and ensuring precision, accuracy, and reliability in all steps of the calculation process.", 
"score": 89
}   
\end{lstlisting}

The following result is obtained from the Qwen3-8B model. A score of 98 was evaluated on the validation set, using 100 problems sampled from the training set. This score is higher than the test score of 92.
\begin{lstlisting}[breaklines=true,showstringspaces=false,basicstyle=\fontsize{7}{7}\ttfamily, caption=Optimized Prompt for GSM8K using Qwen3-8B Model]
{
"role": "Math Problem Solver", 
"goal": "Solve complex problems by decomposing them into sequential steps, applying arithmetic operations (percentages, fractions, averages), ensuring unit consistency, and presenting the final answer in a boxed format.", 
"expected_output": "A single numeric value boxed (e.g., \\boxed{64}) representing the solution, derived through precise step-by-step calculations with attention to percentages, fractions, and averages.", 
"imperative": "Analyze the problem statement, execute calculations step-by-step with focus on percentages, fractions, and unit conversions, then box the final numeric result.", 
"score": 98
}
\end{lstlisting}

The following shows the meta prompt for optimizing the prompt template for MCQA dataset.
\begin{lstlisting}[language=Python, breaklines=true,showstringspaces=false,basicstyle=\fontsize{7}{7}\ttfamily, caption=Meta Prompt and Template for MCQA Prompt Optimization]
OPT_META_INSTRUCTION = "Your proposed prompt template will be used in the following way.
* You are "role" -- this role must be suitable for solving the demo task.
* Your personal goal is: "goal" -- the goal achieves the outputs given inputs.
* This is the expected criteria for your final answer "expected_output" -- this constrains the output format.
* Extract"task_context" from demo tasks to explain the problem context -- this comes before the input of the task.
* You can add a short imperative instruction "imperative" -- this comes after the input of the task.


[[Several demo tasks of input and outputs will be provided when you solve problem.]]

[[The previous optimized prompt templates with scores appear from the worst to the best.]]
{optimization_history}

* Given the previous optimization results, don't generate duplicate or similar prompt templates.
* Generate prompt template that achieves the best score, and succint and concise instructions.
"

# Template used to instantiate a user prompt from the optimized parameters.
# This is applied to each validation example.
USER_PROMPT_TEMPLATE = "
You are {role}.
Your personal goal is: {goal}.
This is the expected criteria for your final answer: {expected_output}.

This is the general task context.
{task_context}

solve the following task.
{question}
{options}
{option_ids}
{asset_name}
{relevancy}
{question_type}
{subject}

{imperative}
"
\end{lstlisting}

In the following, we show the system and user prompts returned by APO. The system prompt includes three components: role, goal, and expected\_output. In the user prompt, task\_context appears before each question, and an imperative statement follows each question. The test score was 54.
\begin{lstlisting}[breaklines=true,showstringspaces=false,basicstyle=\fontsize{7}{7}\ttfamily, caption=Optimized Prompt for MCQA using Llama-3.3-70B Model]
{
"role": "Industrial Asset Diagnostician", 
"goal": "To accurately identify the most relevant sensors for detecting specific failure modes in various industrial assets and determine the least relevant failure events for abnormal sensor readings, optimizing asset performance and reducing downtime", 
"expected_output": "A concise description of the most relevant sensor for monitoring a specific failure mode or the least relevant failure event that does not significantly contribute to detecting a particular failure mode in the given asset, including the option id or description", 
"task_context": "The task involves analyzing the relationship between sensor readings, failure modes, and industrial assets such as steam turbines, aero gas turbines, electric generators, and compressors, to determine the most relevant sensors for monitoring specific failure modes and the least relevant failure events for abnormal sensor readings", "imperative": "Analyze the provided asset, sensor readings, and failure modes to identify the most relevant sensor for monitoring a specific failure mode or the least relevant failure event for an abnormal sensor reading, considering the relationship between failure modes, sensors, and assets", 
"score": 54}    
\end{lstlisting}

\subsubsection{Main Algorithm}

The main optimization loop follows the generate-select-evaluate cycle described in Algorithm~\ref{alg:prompt-optimization-framework}. It begins by preparing the training and validation sets using the \texttt{Agentics} (\texttt{AG[GSM8K]}). Demo tasks are extracted and transformed into \texttt{OptimizationTask} instances.

In each iteration, candidate prompt templates are generated via self-transduction using the meta-prompt.
These templates are applied to the validation set using the user prompt format.
The responses are evaluated using the grading function defined in \texttt{GSM8K.grade}, and scores are assigned.
The best-performing prompts are retained using a filtering function (\texttt{keep\_best\_k}).
This loop continues until convergence or a maximum number of iterations is reached. 
The use of transduction and asynchronous execution enables parallel evaluation of prompt candidates, improving scalability and runtime efficiency. 

\begin{lstlisting}[language=Python, breaklines=true,showstringspaces=false,basicstyle=\fontsize{7}{7}\ttfamily, caption=Pseudo Code for GSM8K Prompt Optimization]
# load GSM8K training data into \texttt{Agentics} abstraction
trainset = AG.from_json("gsm8k_train.jsonl")
# truncate to a subset for training
trainset = trainset.truncate_states(num_trains)
# create demo tasks from training examples
demosets = create_optimization_demos(trainset, num_demos)
# convert demo tasks into OptimizationTask instances
optimization_tasks = OptimizationTask.create_optimization_tasks(demosets) 
# prepare validation set from remaining examples
validationset = trainset.truncate_states(num_trains, num_trains + num_devs)

# initialize optimizer AG[OptimizationTask] with demo tasks
optimizer = AG.from_states(optimization_tasks, atype=OptimizationTask)

# set default parameters and prompt configuration
set_default_params(optimizer)
optimizer.prompt_template = "{{"demo tasks":{demos}}}"
optimizer.prompt_params = {"role": "Prompt optimizer.", "goal": "Propose diverse prompt templates that achieves high performance for the demo task given as input.", "backstory": "Understand the problem domain given the demo task example and propose what answer should be generated.", "expected_output": "the outputs are role, goal, and the expected output description, and imperative sentence for solving provided tasks."}

# initialize list to store optimized prompt tasks
optimized_tasks = []
for iter_ind in range(max_iter):
    # generate candidate prompt templates using meta-prompt and transduction
    # transduction from demos to prompt parameters
    optimizer.instructions = OPT_META_INSTRUCTION.format(
        optimization_history = get_history_string(optimized_tasks))
    optimizer = asyncio.run(optimizer.self_transduction(
        ["demos"],                                  
        ["role", "goal", "expected_output", "imperative"]))

    # apply candidate prompts to validation set using user prompt format
    opt_eval = optimizer * validationset
    opt_eval.prompt_template = USER_PROMPT_TEMPLATE
    opt_eval = asyncio.run(opt_eval.self_transduction(
        ["role", "goal", "expected_output", "imperative", "question"], 
        ["response_think", "response_answer"]))

    # evaluate responses using GSM8K grading function
    executed_tasks = opt_eval / validationset
    for ind, exectask in enumerate(executed_tasks):
        exectask = asyncio.run(exectask.amap(GSM8K.grade))    
        setattr(optimizer[ind], "score", summary["score"])
    
    # retain top-performing prompts for next iteration
    optimized_tasks.extend(optimizer.states)
    optimized_tasks = keep_best_k(optimized_tasks)
\end{lstlisting}

\subsection{Comparing with Other Frameworks}
In this section, we show code examples to compare the programming models of Agentics, DSPy, and Langgraph.
\begin{itemize}
    \item DSPy is a very popular framework and pioneered the programming model framework for LLMs. DSPy uses Pydantic models (Signatures) to define data types and offers a PyTorch-like programming experience through Modules and Adaptors. We can write a program with a computational graph-like design for the workflows inside the Module.
\item LangGraph focuses on stateful agent orchestration using graph-based control flows. It models workflows as nodes and edges, where each node represents an agent or tool and edges define message-passing dependencies.
\item Agentics offers a principled algebra for typed, composable workflows and fine-grained asynchronous execution. While DSPy supports asynchronous calls, its primary focus remains on declarative prompt optimization and compilation strategies. DSPy typically achieves asynchrony by compiling entire programs asynchronously or using an async alternative to forward. In contrast, Agentics enables functional-style asynchronous execution at the level of individual transformations through logical transduction and asynchronous map/reduce functions.
\end{itemize}

\subsubsection{Overall Comparison}
Instead of arguing that one framework is superior, we aim to highlight the different design intents and goals that Agentics, DSPy, and LangGraph were created to achieve. Each framework reflects a distinct philosophy.

\begin{itemize}
    \item LangGraph and the LangChain ecosystem offer a mature codebase with many useful libraries for agent orchestration, tracing, and state management. In fact, Agentics leverages prompt functions provided by LangGraph.
\item DSPy pioneered declarative programming for LLM workflows and introduced the mechanism for constructing prompts from Pydantic types (Signatures), which remains a core innovation.
\item Agentics, by contrast, is designed for massive asynchronous LLM API calls that transduce strings formatted from one type to another. It directly manages workloads through the AG meta-class, which orchestrates asynchronous transductions at a fine-grained level. This differs from DSPy, which organizes logic imperatively in Modules, and from LangGraph, which requires defining states and graphs for transformations.
\end{itemize}

Due to this design, Agentics programs tend to be simpler and more concise, with most effort focused on specifying types and composing pipelines, as shown in multiple examples in the paper. Error handlings are also easier because the AG meta-class serves as a single control point for Pydantic transducers.

In DSPy, execution is distributed across multiple Modules, and in LangGraph, across graph nodes. When a desired program can be implemented using DSPy’s predefined Modules, the overhead in programming is negligible. Similarly, LangGraph offers powerful features like state tracing, which is difficult in Agentics framework.

In enterprise structured data workflows, the primary requirement is often the ability to process large batches of transformations quickly and efficiently under strict schema constraints.

Next, we will present code examples comparing the composability and programming models of Agentics, DSPy, and LangGraph. The Python scripts were first written by hand and then added comments and unified the styles using the Gemini-2.5 model.

\subsubsection{Text-to-SQL in Agentics}
All three implementations handle data types in a similar way using Pydantic models, and we provide their asynchronous versions for fairness. In the Agentics example, the AG class loads the BIRD-dev dataset and performs a single logical transduction from input fields to the output SQL query. Internally, the AG class maintains a list of typed states, and the PydanticTransducer generates transductions by applying prompt instructions to each state. These transductions are executed asynchronously, enabling fine-grained parallelism and efficient error recovery.

\begin{lstlisting}[language=Python, breaklines=true,showstringspaces=false,basicstyle=\fontsize{7}{7}\ttfamily, caption=Code for Text-to-SQL in Agentics]
from pydantic import BaseModel, Field
from typing import Optional, List, Union
from agentics.core.agentics import Agentics as AG
import asyncio
import os
import json

# Pydantic Data Model (State Definition)
class TextToSQLItem(BaseModel):
    """
    Defines the structure for a single Text-to-SQL example.
    This acts as the state for the Agentics self-transduction.
    """
    question: str = Field(..., description="Natural language question for SQL generation.")
    evidence: List[str] = Field([], description="Hints/Notes for writing the SQL query.")
    ddl_schema: str = Field(..., description="DDL script for database tables.")
    generated_sql_query: Optional[str] = Field(None, description="The generated SQL query.")


async def generate_sql_query(
    dataset: AG[TextToSQLItem], 
    input_fields: List[str]
) -> AG[TextToSQLItem]:
    """
    Uses Agentics self-transduction to translate the input fields (question, evidence, ddl_schema)
    into the desired output field (generated_sql_query).
    """
    output_fields = ["generated_sql_query"]
    
    # Execute the self-transduction (the LLM call)
    # The instructions set on the dataset will be used here.
    dataset = await dataset.self_transduction(
        input_fields, 
        output_fields, 
        instructions=dataset.instructions # Pass instructions explicitly for clarity
    )    
    return dataset


if __name__ == "__main__":   
    # Load the data into an Agentics Dataset object
    text2sql_dataset = AG.from_json(
        "bird_dev_extended.jsonl", 
        atype=TextToSQLItem, 
        jsonl=True
    )
    
    # Define Instructions
    # These instructions guide the LLM during the self-transduction step.
    text2sql_dataset.instructions = """
    You are an expert SQL writer. Your task is to translate the input natural language 
    **question** to a valid SQLite query based on the provided **ddl_schema**. 
    Use the **evidence** as hints. Your final output must *only* contain the SQL query.
    """
    
    # Run the Transduction
    input_fields_to_use = ["question", "evidence", "ddl_schema"]
    final_dataset = asyncio.run(
        generate_sql_query(
            text2sql_dataset, 
            input_fields_to_use
        )
    )
\end{lstlisting}

\subsubsection{Text-to-SQL in DSPy}
A DSPy program is structured around three core components. Signatures, which define data types, Adapters, which encapsulate LLM API calls, and Modules, which implement program logic. Similar to PyTorch, the forward method in a DSPy Module is where the workflow logic resides. This may include calling functions or chaining multiple LLM calls through other Modules. This design supports modularity and composability. However, asynchronous execution in DSPy applies at the level of the forward method, meaning that fine-grained parallelism requires implementing each step as a separate Module.

\begin{lstlisting}[language=Python, breaklines=true,showstringspaces=false,basicstyle=\fontsize{7}{7}\ttfamily, caption=Code for Text-to-SQL in DSPy]
import dspy
import asyncio
from typing import List
from dspy.datasets import DataLoader
from dspy import Signature, InputField, OutputField, Predict, Module, settings
from dspy.adapters import ChatAdapter

# Placeholder for the missing schema retrieval function
def get_schema_description(db_id: str) -> str:
    """Mock function to simulate getting the DB schema for a given ID."""
    return f"""

# Placeholder for the missing SQL cleanup function
def cleanup_sql(sql_str: str) -> str:
    """Simple cleanup to remove common LLM markdown wraps."""
    sql_str = sql_str.strip()
    if sql_str.lower().startswith("```sql"):
        sql_str = sql_str[len("```sql"):].strip()
    if sql_str.endswith("```"):
        sql_str = sql_str[:-len("```")].strip()
    return sql_str.replace('\n', ' ')


# SIGNATURE
class TextToSQLBasicSignature(Signature):
    """You are an expert SQL writer and database analyst. Your task is to translate natural language QUESTION into a valid SQL query that can be executed against a given database schema."""
    DB_SCHEMA: str = InputField(desc="DB table definitions showing names, columns, and relations.")
    NOTE: str = InputField(desc="Hints for writing down the SQL query.")
    QUESTION: str = InputField(desc="Natural language question to convert to SQL.")
    SQL: str = OutputField(desc="Syntactically correct SQL with correct column names using suitable tables and rows. Your response should *only* contain the SQL query. Do not provide explanations or any other text.")

# DSPy ADAPTER 
# This adapter is primarily used to post-process the LLM's output (cleanup).
class SQLChatAdapter(ChatAdapter):
    def parse(self, signature, completion):
        # Use the base class parsing to get the output fields (e.g., {"SQL": "..."})
        fields = super().parse(signature, completion)
        
        # Apply the cleanup function to the SQL output field
        if "SQL" in fields:
            fields["SQL"] = cleanup_sql(fields["SQL"])
        return fields
 
# DSPy MODULE
class TextToSQLPrompter(Module):
    def __init__(self):
        super().__init__()
        self.program = Predict(TextToSQLBasicSignature)
      
    async def aforward(self, db_id: str, question: str, evidence: List[str]):
        """
        The native asynchronous forward pass for the module.
        Uses .acall() on the internal predictor.
        """
        table_descriptions = get_schema_description(db_id)
        evidence_str = "\n".join(evidence)
        
        # Use .acall() and await the result
        prediction = await self.program.acall(
            DB_SCHEMA=table_descriptions, 
            NOTE=evidence_str, 
            QUESTION=question
        )        
        return prediction

async def run_batch_aforward(model: TextToSQLPrompter, dev_set: List[dspy.Example]):
    """Runs a batch of DSPy predictions concurrently using the module's aforward."""
    print(f"Starting concurrent execution for {len(dev_set)} questions...")
    
    tasks = []
    
    for example in dev_set:
        # Get the input dictionary for the module
        inputs = example.inputs()
        
        # Create an asynchronous task using the module's aforward method
        async def aforward(model, inputs, original_q):
            # Call the custom module's aforward
            prediction = await model.aforward(**inputs)
            return {
                "question": original_q,
                "generated_sql": prediction.SQL,
                "error": None
            }
        tasks.append(aforward(model, inputs, example.question))

    # Run all tasks concurrently and wait for all to complete
    results = await asyncio.gather(*tasks)
    return results


if __name__ == "__main__":   
    # Example setup using OpenAI
    lm = dspy.OpenAI(model="llm model id",)    
    dspy.settings.configure(lm=lm, adapter=SQLChatAdapter())
    
    # Load Data
    data_loader = DataLoader()
    dev_set = data_loader.from_json(file_path="bird_dev_extended.jsonl", 
                                          fields=["db_id", "question", "evidence"])            
    # Instantiate the Module and Evaluate
    text2sql = TextToSQLPrompter()
    
    # Run the full batch asynchronously using aforward
    final_results = asyncio.run(run_batch_aforward(text2sql, dev_set))
\end{lstlisting}

\subsubsection{Text-to-SQL in Langgraph}
The LangGraph example demonstrates how an application is defined as a state graph, starting with a Graph State and adding nodes for specific operations, such as generating SQL from a question using an LLM. Compared to DSPy, LangGraph expresses program logic declaratively through graph nodes and edges, which is intuitive for multi-step workflows. Asynchronous execution is supported by adding asynchronous nodes. However, parallelism is controlled at the graph level, not at the level of individual nodes. In practice, the run-batch function loops over problems and invokes the application per state, meaning fine-grained parallelism is limited. By contrast, Agentics enables pipeline-level control where each transformation (analogous to a node) can execute asynchronously and independently, offering greater flexibility and scalability for structured workflows.

\begin{lstlisting}[language=Python, breaklines=true,showstringspaces=false,basicstyle=\fontsize{7}{7}\ttfamily, caption=Code for Text-to-SQL in Langgraph]
import operator
import asyncio
from typing import TypedDict, Annotated, List, Dict

# LangGraph and LangChain Imports
from langgraph.graph import StateGraph, END, START
from langchain_core.prompts import ChatPromptTemplate
from langchain_openai import ChatOpenAI

# --- Graph State Definition ---
class TextToSQLState(TypedDict):
    """
    Represents the state of our graph.
    """
    db_schema: str
    question: str
    sql_query: Annotated[str, operator.replace]
    error: Annotated[str, operator.replace]


# --- Prompt Template ---
SQL_PROMPT_TEMPLATE = """You are an expert SQL writer and database analyst. Your task is to translate natural language QUESTION into a valid SQL query that can be executed against a given database schema.

DB_SCHEMA:
{db_schema}

QUESTION:
{question}

NOTE:
You must only output the syntactically correct SQL query. Do not provide explanations or any other text.
"""
sql_prompt = ChatPromptTemplate.from_template(SQL_PROMPT_TEMPLATE)


# --- Asynchronous Node for SQL Generation (The core change) ---
async def generate_sql_async(state: TextToSQLState) -> TextToSQLState:
    """
    Generates a SQL query from the question and schema using an LLM asynchronously.
    
    NOTE: The function is now 'async def' and uses 'await llm.ainvoke'.
    """
    # Initialize your chosen LLM (LangChain clients are often thread-safe)
    llm = ChatOpenAI(model="gpt-4o", temperature=0) 
    
    # Format the prompt
    formatted_prompt = sql_prompt.format(
        db_schema=state["db_schema"],
        question=state["question"]
    )
    
    # Run the LLM asynchronously
    try:
        response = await llm.ainvoke(formatted_prompt)
        sql_query = response.content.strip()

        # Simple cleanup (removing markdown fences)
        if sql_query.lower().startswith("```sql"):
            sql_query = sql_query[len("```sql"):].strip()
        if sql_query.endswith("```"):
            sql_query = sql_query[:-len("```")].strip()
            
        # print(f"Generated SQL for '{state['question'][:30]}...': {sql_query}") # Uncomment for debugging

        # Update the state with the generated SQL
        return {"sql_query": sql_query, "error": ""}
    
    except Exception as e:
        print(f"Error during SQL generation for: {state['question'][:30]}...: {e}")
        # Update state with the error
        return {"sql_query": "", "error": f"Generation failed: {e}"}


# --- Graph Construction ---
# Build the Graph
workflow = StateGraph(TextToSQLState)

# Add the single Text-to-SQL node (using the async function)
workflow.add_node("sql_generator", generate_sql_async)

# Define the single, linear path
workflow.add_edge(START, "sql_generator")
workflow.add_edge("sql_generator", END)

# Compile the graph
app = workflow.compile()


# --- Asynchronous Batch Runner ---
async def run_batch(problems, app):
    """
    Runs the LangGraph app concurrently for a list of questions.
    """
    tasks = []
    
    for problem in problems:
        # Prepare the initial state for the current question
        initial_state = {
            "db_schema": problem["db_schema"],
            "question": problem["question"],
            "sql_query": "",
            "error": "",
        }
        # Create an asynchronous task for each question using app.ainvoke()
        tasks.append(app.ainvoke(initial_state))

    # Run all tasks concurrently and wait for them to complete
    concurrent_results = await asyncio.gather(*tasks)
    
    # Process the results
    final_results = []
    for state in concurrent_results:
        final_results.append({
            "question": state['question'],
            "sql_query": state['sql_query'],
            "error": state['error']
        })
    
    print("Concurrent execution finished.")
    return final_results


if __name__ == "__main__":
    # The asyncio.run() function is used to execute the top-level async function
    # assume that we loaded list of dictionaries storeing text-to-sql problems.
    final_results = asyncio.run(run_batch(problems, app))
\end{lstlisting}

\end{document}